\newcommand{\rone}[1]{{\color{black} #1}}
\newcommand{\etal}{\textit{et al.}} 
\newtheorem{definition}{Definition}
\newtheorem{proposition}{Proposition}
\begin{document}
\title{Scaling Spike-driven Transformer with Efficient Spike Firing Approximation Training}
\author{Man~Yao*, Xuerui~Qiu*, Tianxiang~Hu, Jiakui~Hu, Yuhong~Chou, Keyu~Tian, \\ Jianxing~Liao, Luziwei~Leng, Bo~Xu, and Guoqi~Li$^{\dag}$

\IEEEcompsocitemizethanks{\IEEEcompsocthanksitem M. Yao and X. Qiu and T. Hu, B.Xu and G.Li are with Institute of Automation, Chinese Academy of Sciences, Beijing, China.\protect
\IEEEcompsocthanksitem X. Qiu is also with School of Future Technology, University of Chinese Academy of Sciences, Beijing, China.\protect
\IEEEcompsocthanksitem J. Hu and K.Tian are with the Institute for Artificial Intelligence, Peking University, Beijing, China.\protect
\IEEEcompsocthanksitem Y. Chou is with Department of Computing, The Hong Kong Polytechnic University, HongKong, China.\protect
\IEEEcompsocthanksitem J. Liao and L. Leng are with Huawei ACS Lab, Shenzhen, GuangDong, China.\protect\\
* Equal contribution. \protect\\
$^{\dag}$ Corresponding author: Guoqi Li (E-mail:guoqi.li@ia.ac.cn).\protect}

}
\markboth{FOR REVIEW}%
{Shell \MakeLowercase{\textit{et al.}}: Bare Demo of IEEEtran.cls for Computer Society Journals
}

\IEEEtitleabstractindextext{%
\begin{abstract}
The ambition of brain-inspired Spiking Neural Networks (SNNs) is to become a low-power alternative to traditional Artificial Neural Networks (ANNs). This work addresses two major challenges in realizing this vision: the performance gap between SNNs and ANNs, and the high training costs of SNNs. We identify intrinsic flaws in spiking neurons caused by binary firing mechanisms and propose a Spike Firing Approximation (SFA) method using integer training and spike-driven inference. This optimizes the spike firing pattern of spiking neurons, enhancing efficient training, reducing power consumption, improving performance, enabling easier scaling, and better utilizing neuromorphic chips. We also develop an efficient spike-driven Transformer architecture and a spike-masked autoencoder to prevent performance degradation during SNN scaling. On ImageNet-1k, we achieve state-of-the-art top-1 accuracy of 78.5\%, 79.8\%, 84.0\%, and 86.2\% with models containing 10M, 19M, 83M, and 173M parameters, respectively. For instance, the 10M model outperforms the best existing SNN by 7.2\% on ImageNet, with training time acceleration and inference energy efficiency improved by 4.5$\times$ and 3.9$\times$, respectively. We validate the effectiveness and efficiency of the proposed method across various tasks, including object detection, semantic segmentation, and neuromorphic vision tasks. This work enables SNNs to match ANN performance while maintaining the low-power advantage, marking a significant step towards SNNs as a general visual backbone. Code is available at \href{https://github.com/BICLab/Spike-Driven-Transformer-V3}{Spike-driven Transformer V3.}
\end{abstract}

\begin{IEEEkeywords}
Spiking neural network, Spike-driven, Spiking Transformer, Neuromorphic computing, Efficient architecture and training
\end{IEEEkeywords}}

\maketitle
\IEEEdisplaynontitleabstractindextext
\IEEEpeerreviewmaketitle

\IEEEraisesectionheading{\section{Introduction}\label{sec:introduction}}
Spiking neural networks (SNNs) have garnered significant attention due to their brain-inspired spatio-temporal dynamics and spike-driven efficient computing paradigm \cite{Nature_2,schuman2022opportunities}. Spiking neurons communicate through binary spikes, following an "integrate-and-fire" dynamic model~\cite{Maass_1997_LIF}. Spatial and temporal information are integrated into membrane potentials by spiking neurons. When the membrane potential surpasses a certain threshold, the neuron emits a spike, and the membrane potential is reset according to specific rules. Neuromorphic chips \cite{2014TrueNorth,davies2018loihi,Nature_1} use spiking neurons and synapses as fundamental computing units to implement SNNs on hardware circuits. The spike-driven nature of neuromorphic computing enables SNNs to trigger sparse additions only when a signal is present, offering significant power efficiency advantages over traditional Artificial Neural Networks (ANNs). This low-power advantage is particularly evident in small-scale edge computing scenarios \cite{yin2021accurate,rao2022long,sbx2023}. For instance, the recently introduced asynchronous sensing-computing neuromorphic System on Chip (SoC) Speck \cite{Speck} has a resting power of just 0.42mW, and its operational power consumption in typical neuromorphic vision applications ranges from 0.7 to 15 mW.

In the era of deep learning, model scale is crucial to the success of modern machine intelligence. SNNs are also scaling up. There are two main training methods for large-scale SNNs: ANN2SNN and direct training. ANN2SNN approximates ANN activations using spike firing rates, achieving similar accuracy but requires numerous inference timesteps and abandoning spatio-temporal dynamics \cite{9597475,Rathi2020Enabling,hu2023fast,wu2021progressive,wang2023masked}. Direct training converts the nonlinear differential equations of spiking neurons into a discrete iterable form, and utilizing Spatio-Temporal BackPropagation (STBP) and surrogate gradients for training~\cite{wu2018spatio,10242251}. This method offers flexibility in architecture design and leverages spatio-temporal dynamics of spiking neurons \cite{fang2021deep,zheng2021going,qiu2023vtsnn,yao_attention_Pami,deng2024tensor,wu2024rsc,luo2024integer}, but multi-timestep simulations demand substantial computing resources. For example, direct training a spiking ResNet-19 with 10 timesteps requires approximately $20\times$ more memory than a traditional ResNet-19 \cite{fang2023spikingjelly}. 

On the path to scaling SNNs, current training methods face a paradox. Extended timesteps reduce errors from quantizing membrane potential into spikes, ensuring better task performance. However, multi-timestep simulations demand substantial GPU memory and are considerably more challenging to train, thereby impeding the scalability and performance improvement of SNNs.

Existing perspectives simply think that transforming continuous membrane potential into spikes results in quantization errors \cite{guo2022loss}. We extend this understanding by demonstrating that binary firing is a fundamental mechanism flaw and is the source of paradox. The binary nature of spike firing impairs the spiking neuron's ability to assess the significance of incoming signals (spatial representation), and constrains its memory and forgetting capabilities (temporal dynamics). Current methods, including ANN2SNN and direct training, attempt to address the spatial representation via rate coding \cite{deng2020rethinking} but remain inefficient training and unable to overcome temporal dynamics limitations.

We propose a simple Spike Firing Approximation (SFA) method to solve these flaws. Specifically, SFA uses integers as activations during training, which are then converted to spike trains during inference. This results in a fundamentally different spike firing pattern compared to ANN2SNN and direct training. Unlike the synchronous and random firing of the latter two, the SFA firing pattern is asynchronous and continuous. We demonstrate that this firing pattern significantly affects training, inference power, performance, network scaling, and neuromorphic chip implementation simultaneously. In all these aspects, the SFA approach outperforms ANN2SNN and direct training. 

We then focus on further enhancing the performance upper bound of SNNs through architecture design and model scale expansion. Meta-SpikeFormer \cite{meta_spikeformer} investigates the meta-design of SNN architecture based on spike-driven Transformer \cite{yao2023spike}, which for the first time integrates the spike-driven paradigm into Transformer. We upgrade the Meta-SpikeFormer, including improving the convolution and spike-driven self-attention operators, etc. The proposed Efficient Spiking Transformer (E-SpikeFormer) can simultaneously improve performance and energy efficiency. To solve the performance degradation problem of E-SpikeFormer when scaling up, we design a spike sparse convolution and combine it with a Masked Image Modeling (MIM) \cite{he2022masked,tian2022designing} pre-training strategy.

The proposed methods are validated on both static image classification (ImageNet-1k \cite{deng2009imagenet}), object detection (COCO \cite{lin2014microsoft}), semantic segmentation (ADE20K \cite{zhou2017scene}), and neuromorphic action recognition (HAR-DVS \cite{wang2022hardvs}) tasks. We scale up SNNs and achieve state-of-the-art results on all tested datasets. Our main contributions are:
\begin{itemize}
    \item[$\bullet$] 
    \textbf{Analysis of Spiking Neuron Mechanisms:} We identify binary spike firing as a mechanistic defect impacting both the spatial representation and the temporal dynamics of spiking neurons.

    \item[$\bullet$] 
    \textbf{Spike Firing Approximation (SFA):} \rone{The proposed SFA method leverages the equivalence between integer-based training and spike train inference. We demonstrate that SFA simultaneously optimizes training, power efficiency, performance, and scaling of SNNs by adjusting the spike firing patterns of spiking neurons. We also discuss the implementations of SFA on neuromorphic chips.}
    
    \item[$\bullet$] 
    \textbf{Efficient architecture design \rone{:}} We introduce the E-SpikeFormer, which achieves higher accuracy with lower energy cost based on the prior state-of-the-art SNN by designing efficient spike-driven convolution and self-attention operators.
    
    \item[$\bullet$] 
    \textbf{Scaling SNNs.} We integrate spike sparse convolution and MIM pre-training to address the challenge of performance degradation when scaling SNNs. 

    \item[$\bullet$] 
    \textbf{Enhancing the performance upper bound of SNNs:} We show that SNNs can achieve performance comparable to ANNs across various vision tasks with lower energy consumption, highlighting their potential as a general low-power visual backbone.
\end{itemize}

\begin{figure}[t]
  \centering
  \includegraphics[scale=0.42]{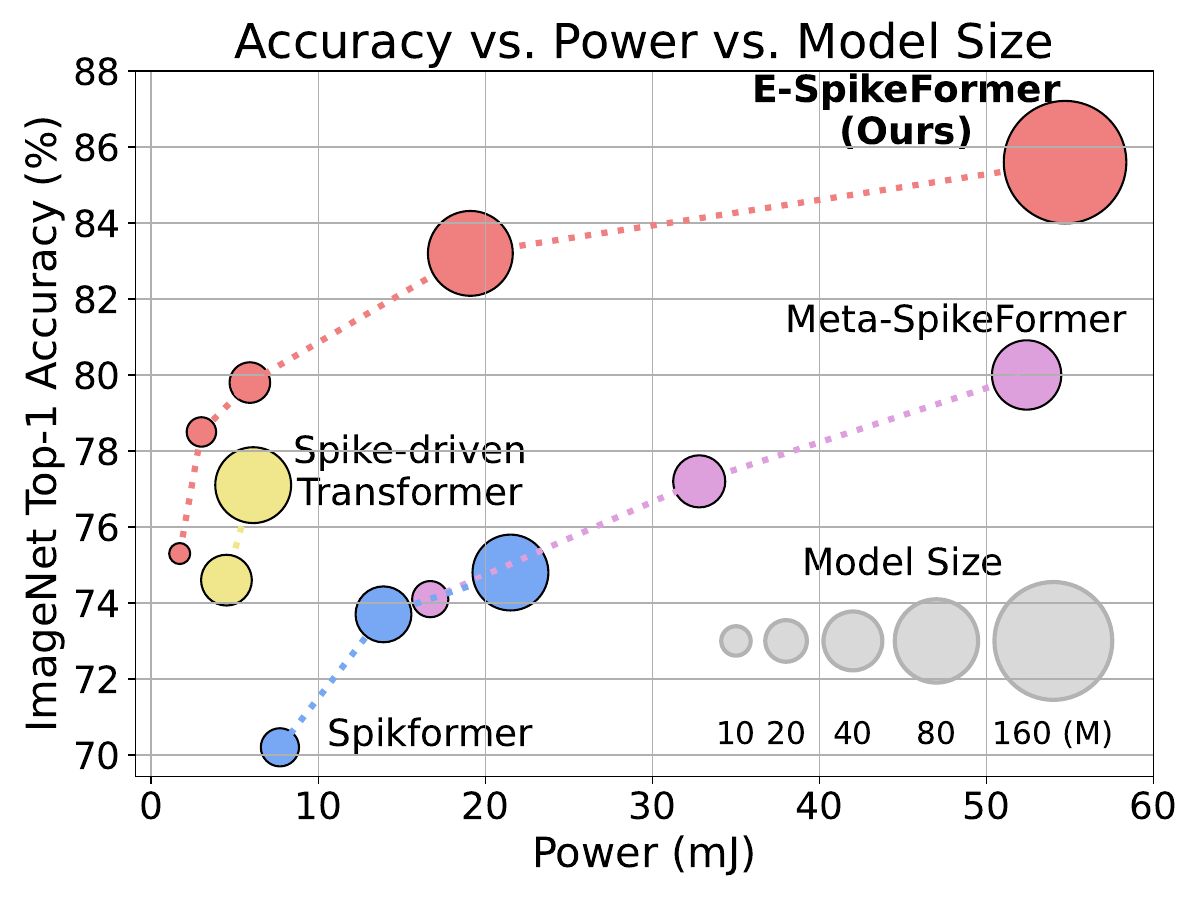}
  \caption{E-SpikeFormer versus other spiking Transformers on ImageNet-1k at $224^2$ input spatial resolution.}
  \label{fig:acc&latency&memor}
\end{figure}

The rest of the paper is organized as follows. Section~\ref{section:relate_work} reports related work. Section~\ref{section:methods} introduces our methods. Section~\ref{sec_SFA_analysis} analyzes the SFA method. Section~\ref{section_Experiment} verifies the effectiveness and efficiency of our methods on static and neuromorphic vision datasets. Section~\ref{sec_discussion} discusses the implementation of the proposed methods on neuromorphic chips. Section~\ref{section:conclusion} concludes this work.

\section{Preliminaries}\label{section:relate_work}
\textbf{Efficient Training in SNNs.} Complex spatio-temporal dynamics make the training cost of SNNs naturally larger than that of ANNs. The multi-timestep simulation requires a memory of $\mathcal{O}(L \times T)$ for STBP training, where $L$ is the layer and $T$ is the timestep. The underlying logic of efficient SNN training is to decouple training memory from timestep $T$. For example, based on the gradient importance analysis, only the spatial gradients of SNNs are kept \cite{xiao2022online,meng2023towards}; pre-training is performed on a single timestep and then the model is fine-tuned to multiple timesteps \cite{lin_2022_pretrain,meta_spikeformer}; temporal or spatial reversible design to drop memory usage \cite{zhang2024memory,hu2024high}; the timestep during training and inference is dynamically reduced based on sample difficulty \cite{chowdhury2022towards,li2024seenn}. These ideas are technically effective in alleviating the training challenges of large-scale SNNs but ignore a fundamental question: ``why do SNNs need to employ multi-timestep simulation in static tasks?" In this work, we provide an in-depth analysis of this problem and provide a solution. 

\textbf{Architecture Design in SNNs.} The main theme of SNNs in the deep learning era is scaling up. Architecture design is one of the important technical routes to reach this goal. Early efforts centered around the Spiking ResNet design. Directly copying the residual connection from ResNet \cite{he_resnet_2016} into SNNs \cite{zheng2021going} proved to be incapable of achieving identity mapping \cite{He2016}, and thus still suffered from performance degradation. To address this issue, SEW-ResNet \cite{fang2021deep} and MS-ResNet \cite{hu2024advancing} successfully extend the depth of SNNs to several hundred layers by creating shortcuts between spikes and membrane potentials between different layers, respectively. In contrast, recent efforts to combine SNN and Transformer \cite{vaswani_2017_attention_is_all_you_need,dosovitskiy2020image} have mainly focused on how to design spike-driven self-attention mechanisms. None of the earlier spiking Transformers \cite{zhang2022spike,wang2023complex,zhou2023spikformer} can be spike-driven. It was not until Spike-driven Transformer \cite{yao2023spike} implemented spike-driven self-attention using only addition and mask operations, thus making only sparse addition in spiking Transformer for the first time. Spike-driven Transformer v2 (i.e., Meta-SpikeFormer \cite{meta_spikeformer}) explores the meta-design\cite{yu2022metaformer,chou2024metala} of SNNs and discusses its potential impact on neuromorphic chip designs. In this work, we further optimize the redundancy design in the Meta-SpikeFormer to achieve higher performance with fewer parameters and power. 

\textbf{Masked Image Modeling (MIM) \cite{he2022masked}} is a self-supervised learning strategy that can effectively increase the size of visual models. Inspired by the masked pretrain-finetune language modeling in NLP \cite{devlin2018bert}, BEiT \cite{bao2021beit} first led the way in integrating MIM into the visual domain by training to predict visual tokens. Then, the famous MAE \cite{he2022masked} introduced an asymmetric encoder and decoder architecture, bypassing masked tokens in computation-intensive encoders, while all tokens are handled through a lightweight decoder. Pre-training with MAE-style facilitates the Vision Transformer (ViT) in bolstering its capacity for local bias and overcoming its performance degradation \cite{he2022masked,zhou2021image,wei2022masked}. MAE-style pre-training can also help CNN or CNN+ViT hybrid architectures scale up but requires adjustments to the training strategy. For example, MC-MAE \cite{gao2023mcmae}, Convnext v2 \cite{Woo_2023_CVPR}, and Spark \cite{tian2022designing} have employed masked or sparse convolutions to prevent information leakage and reconcile disparities between pre-training and fine-tuning. Since we follow the meta-architecture design of SNN in \cite{meta_spikeformer}, and the meta-SNN is in the hybrid form of Conv+ViT, we draw on the hybrid MIM framework in \cite{gao2023mcmae,Woo_2023_CVPR,tian2022designing} to realize large-scale pre-training of our E-SpikeFormer.

\section{Methods}\label{section:methods}
We introduce and analyze the spiking neuron models in Section~ \ref{subsection:spiking_neuron}. Then we propose the new training method, E-SpikeFormer architecture, and MIM strategy for scaling SNNs in Section~\ref{subsection_SFA}, \ref{subsection:E_spikeformer}, and \ref{subsection_scale_training}, respectively.

\subsection{Spiking Neuron}\label{subsection:spiking_neuron}
\subsubsection{Integration, Firing and Reset Mechanisms}
The spatio-temporal dynamics and spike communication of spiking neurons are abstracted from biological neurons \cite{Maass_1997_LIF}. Due to the high computational complexity of biological neuron dynamics, existing spiking neurons are simplified into differential equations to facilitate computer simulations. By solving the differential equation, an iterative version of the spiking neuron can be obtained \cite{wu2018spatio,neftci2019surrogate}:
\begin{align}
    \mathbf{U}[t] &= \beta \mathbf{H}[t-1] + \mathbf{X}[t], \label{eq:U_definition} \\
    \mathbf{S}[t] &= \text{Hea}(\mathbf{U}[t]-V_{\text{th}}), \label{eq:S_definition} \\
    \mathbf{H}[t] &= 
        \begin{cases}
            \mathbf{U}[t]\cdot(1-\mathbf{S}[t]) + V_{\text{reset}}\cdot\mathbf{S}[t], & \text{hard reset}, \label{eq:H_hard_reset} \\
            \mathbf{U}[t]-V_{\text{th}}\cdot\mathbf{S}[t], & \text{soft reset},
        \end{cases} 
\end{align}
where $\textit{t}$ is the timestep, $\mathbf{U}[t]$ represents the membrane potential after charging but before firing. Spatial input $\mathbf{X}[t]$ is extracted from the original spike input through a Conv or MLP operation, temporal input $\beta \mathbf{H}[t-1]$ is derived from the decay of the membrane potential at the previous timestep, $\beta$ is the decay (leaky) factor, $\mathbf{S}[t]$ is the output spike, $\mathbf{H}[t]$ is the membrane potential after firing, $\text{Hea}(\cdot)$ is the Heaviside function that $\text{Hea}(\textit{x})=1$ if $x\geq0$. When $\mathbf{U}[t]$ exceeds the threshold $V_{\text{th}}$, spiking neurons will fire spikes, and the membrane potential is reset to $V_{\text{reset}}$. 

Eq.~\eqref{eq:U_definition}, Eq.~\eqref{eq:S_definition}, Eq.~\eqref{eq:H_hard_reset} represent the integration, firing and reset mechanisms of spiking neurons, respectively:

\romannumeral1) Integration defines how spatio-temporal information is fused. For example, when using hard reset, in Integrate-and-Fire (IF) neurons \cite{gerstner2014neuronal}, $\beta = 1$; whereas in Leaky-Integrate-and-Fire (LIF) \cite{Maass_1997_LIF}, $\beta \not= 1$. Thus, LIF and IF memorize information from previous timesteps differently. 

\romannumeral2) Firing defines how information is transmitted in SNNs. Spiking neurons fire binary spikes only when the membrane potential exceeds the threshold. SNNs then can enjoy the low-power nature brought by event (spike)-driven, which means that the network only triggers sparse addition when there is input~\cite{indiveri2011neuromorphic,yao2023spike}.

\romannumeral3) Reset defines how a spiking neuron memorizes and forgets (Fig.~\ref{Fig_reset}). We discuss three reset mechanisms: no reset \cite{fang2023parallel}, hard reset \cite{Maass_1997_LIF}, soft reset \cite{gerstner2014neuronal}. No reset and hard reset represent two extreme cases. The former remembers all information, resulting in infinite firing over time. The latter forgets any existing information after a neuron fires a spike (if $V_{\text{reset}} = 0$). As a trade-off between no reset and hard reset, soft reset preserves some information while subtracting $V_{\text{th}}$ from the membrane potential after firing a spike.

\subsubsection{Inherent Flaws in Binary Spike Firing}\label{subsec:inherent_flaw}
Spiking neuron models are typically defined by their integration and reset mechanisms. Based on Eq.~\eqref{eq:U_definition} and Eq.~\eqref{eq:H_hard_reset}, four types of spiking neurons can be identified: LIF with Hard Reset (LIF-HR); LIF with Soft Reset (LIF-SR); IF with Hard Reset (IF-HR); IF with Soft Reset (IF-SR). 

\begin{figure*}[t]
\centering
\subfigure[Spatial input.]{\includegraphics[scale=0.7]{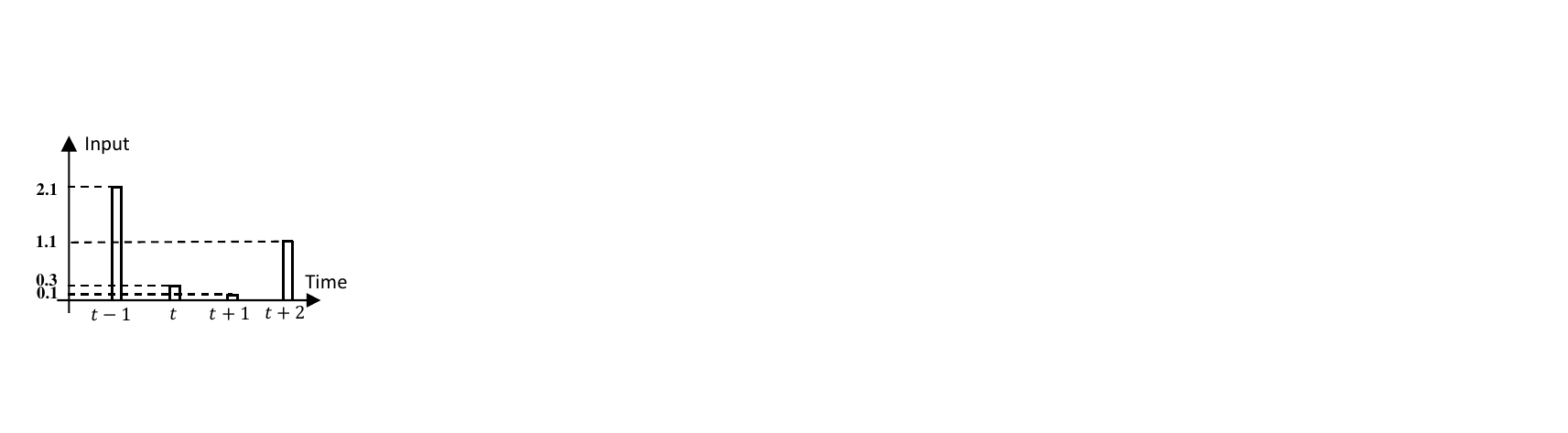}}
\subfigure[No reset.]{\includegraphics[scale=0.7]{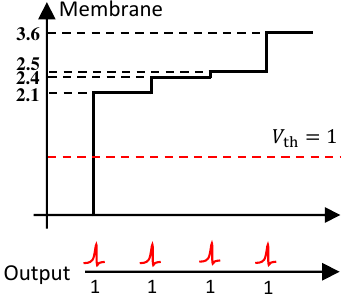}}
\subfigure[Hard reset.]{\includegraphics[scale=0.7]{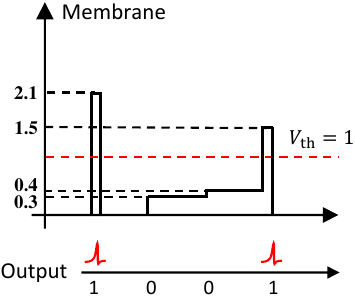}}
\subfigure[Soft reset.]{\includegraphics[scale=0.7]{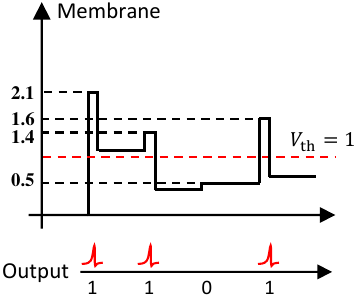}}
\caption{The impact of the reset mechanism on the spatio-temporal dynamics of spiking neurons. (a) The $(t-1)$-th and $(t+2)$-th inputs are greater than the threshold, the $t$-th and $(t+1)$-th inputs are not. Here we set $\beta = 1$. (b) If no reset, the spiking neuron will keep firing after the input exceeds the threshold at some point. (c) If hard reset to zero, a spike will be fired whenever the spatial input at the current timestep exceeds the threshold. Inputs that do not exceed the threshold will not trigger a spike. (d) If there is a soft reset, the membrane potential at $t$-th timestep contains part of the information of $(t-1)$-th timestep; even if the input at $t$-th is small, a spike may be fired. 
}
\label{Fig_reset}
\end{figure*}

An often overlooked fact is that the firing mechanism (Eq.~\eqref{eq:S_definition}) also influences the integration and reset mechanisms. Consequently, binary spike firing introduces at least two intrinsic flaws for spiking neurons:

\romannumeral1) \textbf{Spatial Representation.} It is impossible to assess the importance of the current spatial input. Regardless of how large the membrane potential is, the output is just a spike. This can also be interpreted as a large quantization error in the conversion of the membrane potential to a spike.

\romannumeral2) \textbf{Temporal Dynamics.} It is impossible to perform data-dependent memorization or forgetting. As shown in Eq.~\eqref{eq:H_hard_reset}, $\mathbf{S}[t]$ can only take on the value 0 or 1, implying that the soft reset can only forget fixed values.

Let’s take the spiking neurons in Fig.~\ref{Fig_reset} as an example. Assume that the spatial input at $(t-1)$-th is very important (large value). In the case of the hard reset, the spiking neuron outputs a spike and clears the membrane potential to $V_{reset}$. Obviously, this has a large quantization error and cannot reflect the fact that the input is very important. In the case of the soft reset, the spiking neuron outputs a spike, but the membrane potential passed to the next timestep is still large. One possibility is that the spiking neuron fires for a long time afterward, and then you cannot tell whether the firing at $t$-th is caused by the $(t-1)$-th spatial input or by the current spatial input (as shown in the example in Fig.~\ref{Fig_reset}(d)). Thus, the soft reset may result in identical memory and forgetting processing at each timestep.

\subsection{Spike Firing Approximation (SFA) Method}\label{subsection_SFA}

Previous SNNs exploited multi-bit spikes (integers) \cite{ponghiran2022spiking,shen2023conventional} or continuous value \cite{wu2021liaf} for the motivation of reducing quantization error. This in fact mitigates the flaws of binary spikes. But, the concern that comes with it is that the spike-driven nature is destroyed. We address these concerns via three steps: Step-1, we assume that the signal transmitted between spiking neurons is the spike firing rate. Step-2, during training, we approximate and replace the above spike firing rate with integer-valued activations as the forward-propagation signal. Step-3, during inference, we preserve the spike-driven nature by equating the integer activation to binary spikes over multi-timesteps. We name the whole process ``Spike Firing Approximation (SFA)".

\begin{figure}[t]
    \centering
    \includegraphics[scale=0.85]{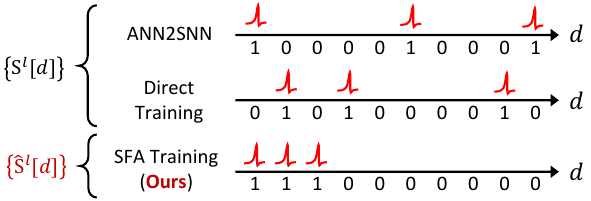}
    \caption{Spike firing patterns. Assuming an approximation of 0.3 ($a^l_D=0.3$), i.e., three spikes need to be fired at ten timesteps ($D=10$). The timesteps at which the three spikes appear are synchronized and randomized for ANN2SNN and direct training SNN, where synchronization means that all ten timesteps must be completed to compute the spike firing rate. In contrast, SFA training will not fire spikes on the remaining seven timesteps after firing them on the first three timesteps. Therefore, the spike firing in SFA can be realized in an asynchronous manner (detailed in Section~\ref{sec_hardware_analysis}).}
    \label{Fig_spike_firing_pattern}
\end{figure}

\textbf{SFA Training and Inference.} Existing ANN2SNNs\cite{wu2021progressive} and vanilla direct training SNNs\cite{wu2019direct,rathi2021diet,wei2024event} can be understood as rate coding (Fig.~\ref{Fig_spike_firing_pattern}). Inspired by this, we assume that spiking neurons transmit spike firing rates (continuous values) in the spatial dimension.

In Step-1, The spike firing rate $a^l_D$ of a spiking neuron is defined as the ratio of fired spikes to the total timestep:
\begin{equation}
    a^l_D=\frac{1}{D}\sum^D_{d=1}\mathbf{S}^l[d],
    \label{eq:spike_firing_rate}
\end{equation}
where $\{\mathbf{S}^l[d]\}_D$ is the spike train in ANN2SNN or direct training over given $D$ timesteps at $l$-th layer. 

In Step-2, we perform integer activation training, i.e., employ an integer to approximate $a^l_D$. We replace $\sum^D_{d=1}\mathbf{S}^l[d]$ with the integer-valued activation of a single timestep:
\begin{equation}
    \rone{\mathbf{S}^l_D = \text{Fire}_D(\mathbf{U}^l) = \lfloor \text{clip}\{\mathbf{U}^l,0,D\}\rceil,}
    \label{eq:integer_replace_spike_train}
\end{equation}
where $\mathbf{S}^l_D$ is the integer activation within $[0,D]$, $\mathbf{U}^l$ is the membrane potential before firing, $\text{Fire}_D(\cdot)$ is the integer-valued fire function, $D$ indicates the maximum integer value allowed to be fired. And $\text{clip}\{\mathbf{U}^l,0,D\}$ confines the value of $\mathbf{U}^l$ within 0 and $D$, $\lfloor\cdot\rceil$ rounds to the nearest integer. 

In Step-3, we perform spike activation inference:
\begin{equation}
    \mathbf{S}^l_D = \sum^D_{d=1}\hat{\mathbf{S}}^l[d],
    \label{eq:integer_to_spike_train}
\end{equation}
where $\{\hat{\mathbf{S}}^l[d]\}_D$ is the spike train in our SFA training. The spatial input to the spiking neuron at ($l+1$)-th layer is
\begin{equation}
    \mathbf{X}^{l+1} = \mathbf{W}^{l+1}a^l_D = \mathbf{W}^{l+1}\frac{1}{D}\mathbf{S}^l_D = (\frac{1}{D}\mathbf{W}^{l+1})\sum_{d=1}^D\hat{\mathbf{S}}^l[d]. \label{eq:input_of_neuron}
\end{equation}

As a simple summary, the SFA method replaces $\mathbf{S}[t]$ in Eq.~\eqref{eq:S_definition} with $\mathbf{S}^l_D$ during training; when inference, each timestep is expanded into $1 \times D$ timesteps, and the corresponding spike train is $\{\hat{\mathbf{S}}^l[d]\}_D$. \rone{The integer firing of SFA not only reduces the quantization error when converting the membrane potential of spiking neurons into spikes, but also enhances the forgetting capability of spiking neurons at the current timestep to address temporal dynamics deficiencies.} More details of SFA method are given in Section~\ref{sec_SFA_analysis}. In addition, in Section~\ref{section_Experiment}, we systematically analyze that replacing the existing $\{\mathbf{S}^l[d]\}_D$ with our $\{\hat{\mathbf{S}}^l[d]\}_D$ spike firing pattern (Fig.~\ref{Fig_spike_firing_pattern}) has a profound impact on the training, performance, energy consumption, and scaling of SNNs.

\begin{figure*}[t]
\centering
\includegraphics[scale=0.75]{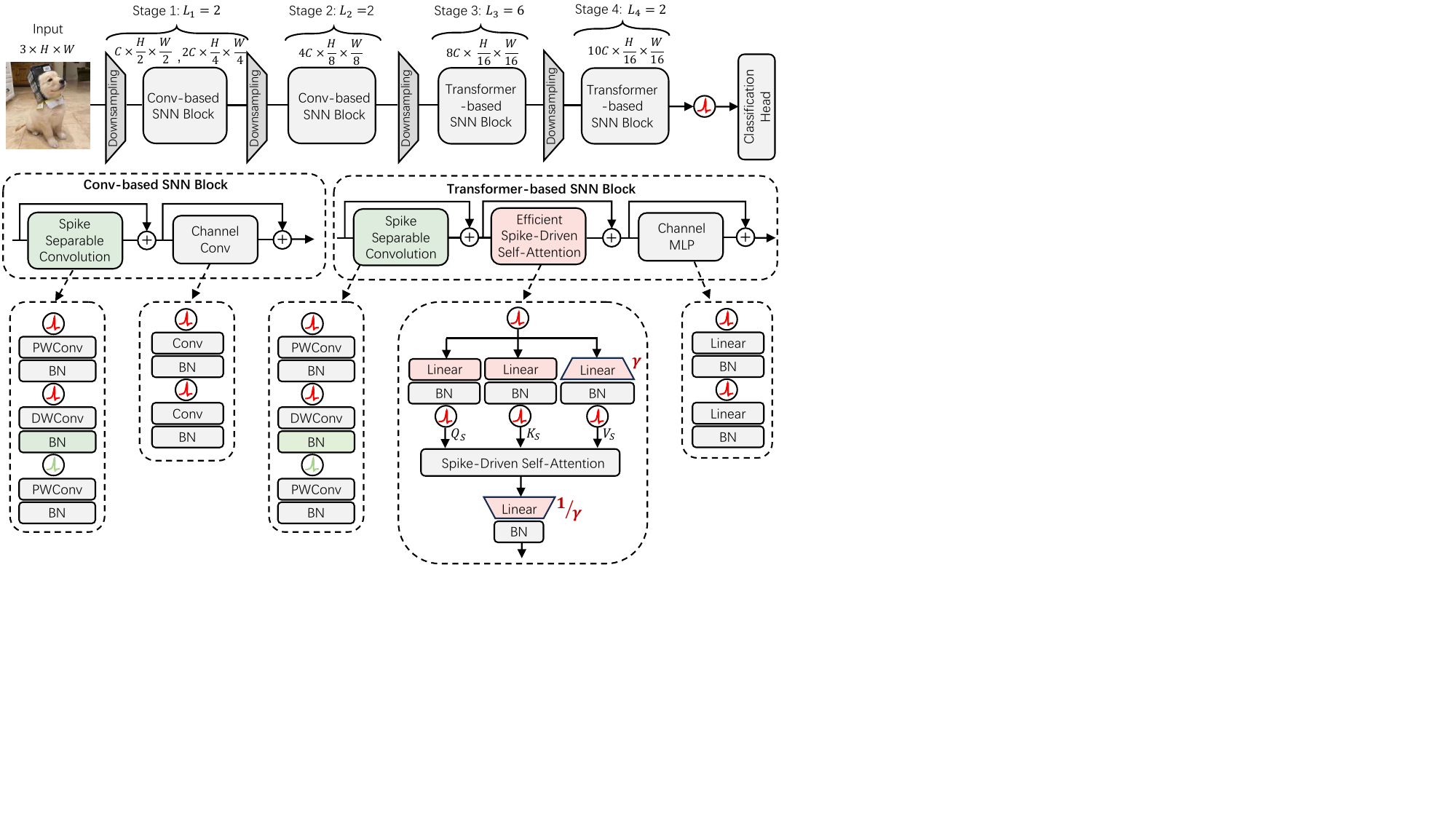}
\caption{The overview of E-SpikeFormer. In general, we follow the design of Meta-SpikeFormer \cite{meta_spikeformer} on a macro level, such as CNN-based and Transformer-based SNN blocks and their proportions at each stage. We redesign the interior of CNN-based and Transformer SNN blocks with the goal of efficient architecture. In this figure, we have marked all upgraded designs in \emph{green} or \emph{pink}. In the green part, we insert a BN layer and a spiking neuron layer after the DWConv layer in the CNN-based block to avoid the increase in power caused by the larger convolution kernel size after re-parameterization. In the pink part, the Re-parameterization Convolution (RepConv) in \cite{meta_spikeformer} is replaced by a linear layer to reduce power, and we compensate for the potential performance loss by expanding the channel number of $V_{S}$. In addition, we add a Spike Separable Convolution (SpikeSepConv) module to the Transformer-based SNN block in \cite{meta_spikeformer}, which helps performance.}
\label{fig_spike_driven_v3_arch}
\end{figure*}

\subsection{E-SpikeFormer}\label{subsection:E_spikeformer}
Meta-SpikeFormer \cite{meta_spikeformer} explores the meta-design of SNNs in Spike-Driven Self-Attention (SDSA), architecture, and shortcut. We here focus on designing a more efficient SNN based on Meta-SpikeFormer, and name it "E-SpikeFormer". 

\textbf{Overview.} E-SpikeFormer contains Conv-based and Transformer-based SNN stages (Fig.~\ref{fig_spike_driven_v3_arch}). Each stage contains several blocks, each block containing a token mixer module and a channel mixer module. Many Reparameterization Convolutions (RepConv) are used in Meta-SpikeFormer, which results in a significant increase in power. According to Table~5 in Meta-SpikeFormer, the RepConv part accounts for approximately 60\% of the total energy cost. In this work, we eliminate all RepConvs and compensate for performance by incorporating a set of additional lightweight Spike Separable Convolution (SpikeSepConv). In Fig.~\ref{fig_spike_driven_v3_arch}, we label all upgraded designs of our E-SpikeFormer relative to Meta-SpikeFormer as \emph{green} or \emph{pink}. We see that there are three primary modifications in our approach: \romannumeral1) In the Conv-based SNN block, the token mixer has been replaced with a SpikeSepConv; \romannumeral2) An SpikeSepConv module has been added prior to the token mixer (i.e., SDSA module) in the Transformer-based SNN block; \romannumeral3) In the SDSA module, RepConv has been substituted with a linear operation.

\textbf{Conv-based SNN Block} in \cite{meta_spikeformer} uses separable convolution \cite{sandler2018mobilenetv2} as the token mixer. However, in SNNs, the direct connection between depthwise and pointwise convolutions with subsequent reparameterization poses a serious energy consumption problem. Our solution is to add a Batch Normalization (BN) layer \cite{ioffe_batchNorm_2015} and a spiking neuron layer between depthwise and pointwise convolutions. Specifically, our CNN-based SNN block is written as:
\begin{align}
&\mathbf{U}' = \mathbf{U} + \text{SpikeSepConv}(\mathbf{U}),\\
&\mathbf{U}'' = \mathbf{U}' + \text{ChannelConv}(\mathbf{U}'),\\
&\text{SpikeSepConv}(\mathbf{U}) = 
\begin{aligned}[t]
    &\text{Conv}_\text{pw2} \big( \text{SN} \big( \text{Conv}_\text{dw} \big(  \\
    &\text{SN} \big(\text{Conv}_\text{pw1} \big( \text{SN} (\mathbf{U}) \big) \big)) \big) \big),
\end{aligned}\\
&\text{ChannelConv}(\mathbf{U}') = \text{Conv}(\text{SN}(\text{Conv}(\text{SN}(\mathbf{U}')))),
\end{align}
where $\text{SpikeSepConv}(\cdot)$ is the token mixer, $\text{ChannelConv}(\cdot)$ is the channel mixer, $\text{Conv}_\text{pw1}(\cdot)$ and $\text{Conv}_\text{pw2}(\cdot)$ are pointwise convolutions, $\text{Conv}_\text{dw}(\cdot)$ is depthwise convolution \cite{chollet2017xception}, $\text{Conv}(\cdot)$ is the vanilla convolution. ${\text{SN}}(\cdot)$ is the spike neuron layer. Note, we ignore the BN layer for ease of writing.

\begin{figure*}[t]
\centering
\includegraphics[width=0.78\textwidth]{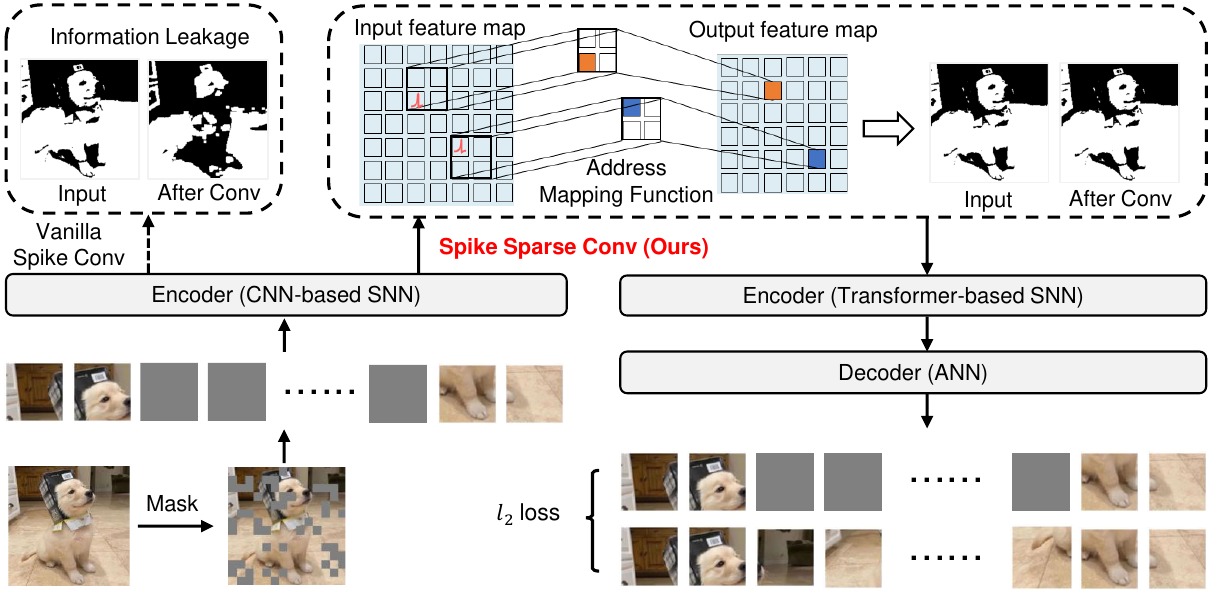}
\caption{The overview of MIM pre-train in E-SpikeFormer. It consists of a SNN encoder and a ANN decoder. The encoder processes only the visible pixels. The decoder reconstructs the images to learn representations, which will be removed in the fine-tuning stage. Using Vanilla Spike Convolution (VSC) as an encoder will lead to information leakage, so we designed Spike Sparse Convolution (SSC) to avoid this flaw. White and black in the figure represent the one/zero area respectively. Exploiting VSC will cause the black zero area to continue to grow as the depth increases until there is no valid information. We show that SSC is a natural fit for neuromorphic chips due to the spike-driven nature of SNNs (Section~\ref{subsec_scale_2}).
}
\label{fig_MIM_SNN}
\end{figure*}

\textbf{Transformer-based SNN Block.} Motivated by the observation that adding a convolution layer prior to the self-attention operator to extract local features enhances the integration of global features \cite{cai2022efficientvit,wadekar2022mobilevitv3}, we introduce a lightweight SSC module before the SDSA. This addition aims to offset the performance decline resulting from the elimination of RepConv. Then, the Transformer-based SNN Block can be written as
\begin{equation*}
\begin{aligned}
    \mathbf{U}^{\prime} &= \mathbf{U} + \text{SpikeSepConv}(\mathbf{U}), \\
    \mathbf{U}^{\prime\prime} &= \mathbf{U}^{\prime} + \text{E-SDSA}(\mathbf{U}^{\prime}), \\
    \mathbf{U}^{\prime\prime\prime} &= \mathbf{U}^{\prime\prime} + \text{ChannelMLP}(\mathbf{U}^{\prime\prime}),
\end{aligned}
\end{equation*}
where $\text{E-SDSA}(\cdot)$ is the proposed E-SDSA module. 

\textbf{Efficient Spike-Driven Self-Attention (E-SDSA) Module.} In this work, we combine the generation of $\mathbf{K_S}, \mathbf{Q_S}, \mathbf{V_S}$, the SDSA operator, and the subsequent linear operator as the E-SDSA module (pink module in Fig.~\ref{fig_spike_driven_v3_arch}). In E-SDSA, we utilize linear operations to generate $\mathbf{K_S}, \mathbf{Q_S}, \mathbf{V_S}$ instead of energy-consuming reparameterization convolutions. We adopt the iRMB technique in \cite{zhang2023rethinking} that expands the channels of $\mathbf{V_S}$ by a factor of $\gamma$ to enhance the representation of E-SDSA. Specifically, the E-SDSA module can be written as
\begin{equation*}
\begin{aligned}
    & \mathbf{Q_S} = \text{SN}(\text{Linear}(\mathbf{U})), \mathbf{K_S} = \text{SN}(\text{Linear}(\mathbf{U})), \\
    & \mathbf{V_S} = \text{SN}(\text{Linear}_\gamma(\mathbf{U})), \\
    & \mathbf{U}^{\prime} = \text{Linear}_{\frac{1}{\gamma}}(\text{SN}(\mathbf{Q_S}\mathbf{K_S}^T\mathbf{V_S}*scale)),
\end{aligned}
\end{equation*}
where Linear($\cdot$) is the linear operator with learnable matrix $\mathbf{W} \in \mathbb{R}^{C_{in} \times C_{out}}$ and bias $\mathbf{b} \in \mathbb{R}^{C_{out}}$ ($C_{in}$ and $C_{out}$ are the dimension of input and output respectively. Its mathematics form can be written as $\mathbf{y}=\mathbf{W}\mathbf{x}+\mathbf{b}$. Similarly, $\text{Linear}_\gamma(\cdot)$ has weight of $\mathbf{W}_\gamma \in \mathbb{R}^{C_{in} \times \gamma C_{out}}$ and bias of $\mathbf{b}_\gamma \in \mathbb{R}^{\gamma C_{out}}$. $\text{Linear}_\frac{1}{\gamma}(\cdot)$ has weight of $\mathbf{W}_{\frac{1}{\gamma}} \in \mathbb{R}^{\gamma C_{out} \times C_{in}}$ and bias of $\mathbf{b}_{\frac{1}{\gamma}} \in \mathbb{R}^{C_{in}}$. $\text{SN}(\mathbf{Q_S}\mathbf{K_S}^T\mathbf{V_S}*scale)$ is the SDSA operator, which is consistent with Meta-SpikeFormer \cite{meta_spikeformer}, and constant \emph{scale} factor can be reparameterized into SN's threshold.

\subsection{Spike Sparse Convolution for Scaling Up SNNs}\label{subsection_scale_training}
Directly scaling the ViT leads to performance degradation \cite{dosovitskiy2020image,touvron2021training}. The proposed E-SpikeFormer was not immune to this problem. The pretrain-finetune MIM paradigm is one of the common methods to solve the degradation problem \cite{he2022masked}. Tian \etal \cite{tian2022designing} pointed out that the MIM strategy cannot be used directly in CNNs. An input image is divided into non-overlapping patches in ViTs. Benefiting from ViT's ability to handle irregular and non-overlapping patches, MIM can simply drop masked patches to remove the information. In contrast, CNNs not only operate on regular grids but also perform sliding windows with overlapping. Directly employing MIM in CNNs would result in \emph{information leakage}. That is, since the input contains lots of blank (masked) areas, deep convolutions can hardly learn any features as the network deepens. Tian \etal proposed a solution SparK \cite{tian2022designing} that involves encoding the input image, after applying a zero-mask, using the sparse convolution encoder.

The proposed MIM-SNN follows SparK (Fig.~\ref{fig_MIM_SNN}) but with a newly designed Spike Sparse Convolution (SSC) encoder. The overall process can be written as ${G}=\mathcal{F}_{E}\circ \mathcal{F}_{D}$, which embodies an encoder-decoder architecture. An encoder $\mathcal{F}_{E}$ is responsible for mapping masked inputs $\mathbf{X}_1$ to a latent feature $\mathbf{Z}=\mathcal{F}_{E}(\mathbf{X}_1)$. A decoder $\mathcal{F}_{D}$ reverses this process, mapping $\mathbf{Z}$ back to the pixel space to reconstruct the target view $\mathbf{X}_2$ ($\mathbf{X}_1$ and $\mathbf{X}_2$ are complementary). 

\textbf{Masking.} Patch-wise masking is the first step of MIM. An image $\mathbf{X}$ is restructured into a succession of flattened 2D patches, $ \mathbf{X} \in \mathbb{R}^{r \times p}$, where $p$ denotes the patch size. Then a random binary mask $m \in \{0,1\}^n$ is generated, enabling the acquisition of two complementary \cite{zhang2022mask} masked views of $\mathbf{X}$ :
\begin{equation}
     \mathbf{X}_1=\mathbf{X}[n]\in \mathbb{R}^{r_1 \times p},\quad 
 \mathbf{X}_2=\mathbf{X}[1-n]\in \mathbb{R}^{r_2 \times p},
 \label{eq:mask_view}
\end{equation}
$r_1$ and $r_2$ are integers that $r_1=n(1-\mu)$ and $r_2=n\mu$, ensuring that their sum equals $n$, $\mu$ is the masking ratio.

\textbf{Encoder.} The masked image is similar to neuromorphic data, which only has events when the brightness changes and therefore contains large blank (zero) areas. Intuitively, convolutions in SNNs are naturally sparse convolutions due to their spike-driven nature. As shown in Fig.~\ref{fig_MIM_SNN}, according to the address of the input spike, the address mapping function \cite{Speck} outputs the address of the neuron and synapse that need to perform sparse synaptic addition operations. Unfortunately, using Vanilla Spike Convolution (VSC) cannot avoid information leakage in MIM-SNN. This discrepancy arises from the fact that the VSC would perform convolution at all positions, eventually causing information leakage (it can also be called spike degradation \cite{hu2024advancing}, the spiking neurons in deep layers almost no longer fire spikes). 

To address this challenge, we propose the SSC, which only performs convolution on positions that are not masked. We reshape $\mathbf{X}_1$ with $C_{\mathrm{in}}$ channels with height $H$ and width $W$ in the $d$ dimensional spatial space. We process this feature by SSC with kernel weights $\mathbf{W}\in \mathbb{R}^{\mathnormal{K}^d\times C_{\mathrm{in}} \times C_{\mathrm{out}}}$. As an example, in the 2D coordinate
space, $\bf W$ contains $ C_{\mathrm{in}} \times C_{\mathrm{out}}$ channels with $K$ kernel size and $\mathnormal{K}^d$=$3^2$.
The convolution process of SSC is represented as
\begin{equation}
    \mathbf{Y}_{p} = \sum_{k\in K_{d}}\alpha (\mathbf{W}_{k} \cdot \mathbf{X}_1^{p_{k}}),
\end{equation}
where $k$ enumerates all discrete locations in the kernel space $\mathnormal{K}^d$. $p_k=p+k$ is the corresponding location around center $p$, where $k$ is an offset distance from $p$. $\alpha \in \{0,1\}$ is a selector. When the center $ p \in \mathbf{X}_1 $ is the masked area, $\alpha$ is 0, indicating that the point $p$ does not participate in the computation, preserving the masked feature map. $\alpha=1$ denotes that the region is unmasked and active for computation. During pre-training, the encoder exploits SSC; during fine-tuning, the weights are converted back to VSC. 

\textbf{Decoder.} The output of the encoder is supplied to the decoder along with learnable masked tokens. Then the masked tokens are reconstructed into an image. There are usually two options for the decoder: plain \cite{gao2023mcmae} and multi-scale \cite{tian2022designing} ANN. We found that different decoders have less impact on the accuracy of MIM-SNN pre-train. So during pre-training, we employed a plain ANN (Transformer) as a decoder, which is removed in the fine-tuning and would not impact SNNs' spike-driven inference.

\textbf{Loss Function.} In pre-training, we use MSE loss as the metric function $\mathcal{L}$ and normalized pixel values of each masked patch as the prediction target.
\begin{equation}
    \mathcal{L}_{G}=\mathbb{E}_{ \mathbf{X}}\mathbb{E}_{\mathbf{X}_1,\mathbf{X}_2|\mathbf{X}}\left\| \mathcal{F}_{D}(\mathcal{F}_{E}(\mathbf{X}_1))-\mathbf{X}_2\right\|^2,
    \label{eq:mim-loss}
\end{equation}
\rone{where $\mathcal{F}_{D}(\mathcal{F}_{E}(\mathbf{X}_1))$ is the decoder output and $\| \cdot\|^2$ is $l_2$ norm}.  
\par
Then we exploit the distillation \cite{touvron2021training} technique during fine-tuning, which is the same in Meta-SpikeFormer \cite{meta_spikeformer}.

\textbf{Analysis on MIM-SNN.} In Section~\ref{subsec_SFA_scale}, we reveal how binary spiking affects the scaling of SNNs. In Section~\ref{subsec_scale_2}, we show that the difference between SSC and our VSC lies in the address mapping function \cite{Speck}, thus neuromorphic chips are naturally capable of performing SSC-based MIM-SNN pre-train.

\section{Analysis of Spike Firing Approximation}\label{sec_SFA_analysis}

First, we demonstrate the equivalence between integer-based training and the corresponding spike train inference within the SFA method. Subsequently, we analyze the approximation error that arises from substituting the spike firing rate with integer activations.

\subsection{Implementation of Spike-driven Inference}\label{subsec_dynamic_soft_reset}

The transition between integer training and spike inference can be losslessly achieved by IF-SR neurons with $V_{th} = 1$. 

\begin{proposition} \label{propo_step2_to_step3}
Consider the $\text{Fire}_D(\cdot)$ function at $l$-th layer in SNN, its integer-value output is equal to the sum of spikes generated by IF-SR spiking neuron with $D$ timesteps:
\begin{equation}
    \mathbf{S}^l_D = \sum_{d=1}^D\hat{\mathbf{S}}^l[d],
\end{equation}
where $\mathbf{S}^l_D$ is the integer value fired by $\text{Fire}_D(\cdot)$ at $l$-th layer and $\{\hat{\mathbf{S}}^l[d]\}_D$ is the spike train generated by IF-SR spiking neuron over given $D$ timesteps with $V_{th}=1$.
\end{proposition}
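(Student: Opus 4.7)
The plan is to treat the IF-SR neuron as a deterministic, fully analyzable discrete dynamical system: feed all of the ``charge'' $\mathbf{U}^l$ into the neuron at the first of the $D$ timesteps (so $\mathbf{X}[1]=\mathbf{U}^l$ and $\mathbf{X}[d]=0$ for $d\ge 2$) and track how the soft-reset mechanism peels off integer units of threshold, one per timestep. With $\beta=1$ and $V_{\text{th}}=1$, the recurrence from Eq.~\eqref{eq:U_definition}--Eq.~\eqref{eq:H_hard_reset} collapses to $\mathbf{U}[d]=\mathbf{H}[d-1]$, $\hat{\mathbf{S}}[d]=\text{Hea}(\mathbf{U}[d]-1)$, and $\mathbf{H}[d]=\mathbf{U}[d]-\hat{\mathbf{S}}[d]$, which already contains all the bookkeeping I need.

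First I would prove by induction on $d$ the invariant $\mathbf{H}[d]=\mathbf{U}^l-\sum_{i=1}^{d}\hat{\mathbf{S}}[i]$. The base case is immediate from $\mathbf{H}[0]=0$ and the soft reset at $d=1$; the inductive step follows because a soft-reset event removes exactly one unit from the running remainder. This invariant has two immediate consequences: (i) the neuron fires at timestep $d+1$ iff $\mathbf{H}[d]\ge 1$, so spikes are emitted one per timestep for as long as ``budget'' remains, and (ii) as soon as some timestep fails to fire, the remainder is frozen and no later spike can occur.

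I would then split into three cases on $\mathbf{U}^l$ to match the clip in Eq.~\eqref{eq:integer_replace_spike_train}. If $\mathbf{U}^l\in[0,D]$, the spike count rises monotonically until $\mathbf{H}[d]<1$, yielding exactly $\lfloor\mathbf{U}^l\rfloor$ spikes. If $\mathbf{U}^l>D$, every one of the $D$ timesteps fires and the sum saturates at $D$, matching the upper clip. If $\mathbf{U}^l<0$, then $\mathbf{U}[1]<1$ so no spike is ever emitted and the sum is $0$, matching the lower clip. Stitched together these three cases coincide with $\lfloor\text{clip}\{\mathbf{U}^l,0,D\}\rceil$, which is $\mathbf{S}^l_D$, giving the claimed identity $\mathbf{S}^l_D=\sum_{d=1}^{D}\hat{\mathbf{S}}^l[d]$.

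The main obstacle is the rounding convention: the soft-reset argument naturally yields a floor, whereas $\lfloor\cdot\rceil$ in Eq.~\eqref{eq:integer_replace_spike_train} is sometimes read as round-to-nearest. I would resolve this either by interpreting $\lfloor\cdot\rceil$ in the paper as the integer-part operator realised by the spike counter (which is consistent with the way the integer activation is constructed during training), or by introducing a constant bias of $1/2$ at the first timestep, equivalently shifting the firing threshold to $1/2$; this converts the floor into nearest-integer rounding without affecting any other step of the argument. A secondary, minor point is to make the input protocol explicit in the statement, since the proposition does not spell out how $\mathbf{U}^l$ is delivered to the neuron across the $D$ inference timesteps; the ``deliver once at $d=1$'' convention above is the minimal and most natural such choice.
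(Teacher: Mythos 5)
Your proposal is correct and follows essentially the same route as the paper: inject $\mathbf{U}^l$ only at the first of the $D$ timesteps, let the IF-SR soft reset subtract one unit per timestep so the spike count equals $\lfloor\text{clip}\{\mathbf{U}^l,0,D\}\rfloor$, and reconcile the floor with the round operation $\lfloor\cdot\rceil$ by absorbing a constant $0.5$ bias (the paper folds it into the weights, you fold it into the first-timestep input or the threshold — the same trick). Your version is merely more explicit, spelling out the induction invariant and the three clip cases that the paper leaves implicit.
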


\begin{proof}\label{prop2_proof}
Feeding the membrane potential input of Eq.~\eqref{eq:integer_replace_spike_train} into an IF-SR neuron with $V_{th}=1$, the spike train generated is:
\begin{equation}
    \{\hat{\mathbf{S}}^l[d]\}_D = \text{IF-SR}(\mathbf{U}^l),
\end{equation}
which means that the IF-SR spiking neuron subtracts 1 at each timestep until the membrane potential is less than the threshold. The underlying assumption is that there is non-zero input only at timestep $d=1$, and the input is zero at timesteps $d=2, \cdots, D$. Then the sum of the spike train is:
\begin{equation}
    \sum_{d=1}^D\hat{\mathbf{S}}^l[d] = \lfloor\text{clip}\{\mathbf{U}^l,0,D\}\rfloor,
    \label{eq:IF_SR_spike_sum}
\end{equation}
where $\lfloor\cdot\rfloor$ is the floor function. 
According to Eq. \eqref{eq:integer_replace_spike_train}, we have:
\begin{align}
    \mathbf{S}^l_D &= \lfloor\text{clip}\{\mathbf{U}^l,0,D\}\rceil, \label{eq:eq5} \\
    &= \lfloor\text{clip}\{\mathbf{U}^l+0.5,0,D\}\rfloor, \label{eq:round_to_floor} \\
    &= \lfloor\text{clip}\{\mathbf{U}^l,0,D\}\rfloor, \label{eq:bias_in_weight}
\end{align}
where in Eq. \eqref{eq:bias_in_weight}, the bias '0.5' about the equivalence between the round function $\lfloor\cdot\rceil$ and floor function $\lfloor\cdot\rfloor$ can be incorporated into the weight. Combining Eq. \eqref{eq:IF_SR_spike_sum} and Eq. \eqref{eq:bias_in_weight}, we have:
\begin{equation}
    \mathbf{S}^l_D= \lfloor\text{clip}\{\mathbf{U}^l,0,D\}\rfloor = \sum_{d=1}^D\hat{\mathbf{S}}^l[d]. \label{eq_integer_to_spike}
\end{equation}
\end{proof}

Eq.~\eqref{eq_integer_to_spike} implies the equivalence from integer training to spike train inference. We analyze how this affects training, depending on the task type.

\textbf{One-timestep SFA training for static vision tasks.} Static vision tasks, such as classification and detection, do not contain temporal information. As shown in Fig.~\ref{fig_vanilla_training}, to overcome the defect of spiking neurons in spatial representation, previous direct training forcibly constructed multi-timestep rate encoding through repeated input. Since SFA training changes the spike firing pattern, for static tasks we only need one-timestep training and then convert it to multi-timestep spike-driven inference (Fig.~\ref{fig_sfa_training}). SFA training has two obvious advantages over vanilla training: First, the one-timestep training cost is lower; Second, during inference, there is no need to repeatedly input images in SFA.

\begin{figure*}[t]
\centering
\subfigure[Vanilla training ($D=1$).]{\label{fig_vanilla_training}\includegraphics[scale=0.85]{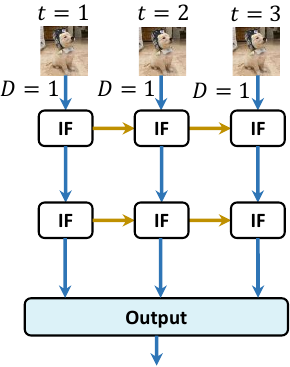}}
\quad \quad\quad\quad\quad \quad
\subfigure[SFA training on static vision tasks ($T \times D = 1\times3$).]{\label{fig_sfa_training}\includegraphics[scale=0.85]{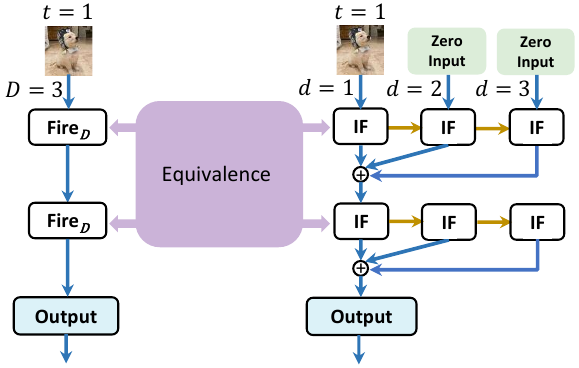}}\\
\quad
\subfigure[SFA direct training on event-based vision ($T \times D = 2\times3$).]
{\label{fig_sfa_training_multi}\includegraphics[scale=0.85]{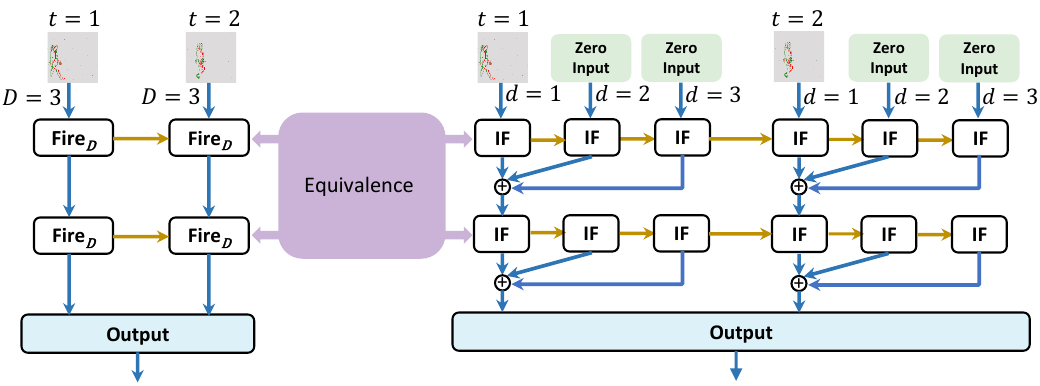}}
\caption{Comparison of vanilla and SFA training in SNNs. $T$ in SNNs typically denotes the timestep. For vanilla training SNNs, $T$ remains consistent between training and inference.In SFA training, the timestep during training is $T$, but it is expanded to $T\times D$ timesteps during inference (The neuron in the picture is IF-SR). Since the activations in SFA training are integers, another meaning of $D$ is the upper limit of the integer values allowed for activation during training. In the context of vanilla training SNNs, $D$ is equal to 1. (a) Direct training uses the same processing approach for both static and dynamic vision tasks. Since $D=1$, direct training will not be able to avoid the mechanism defects of binary spike firing in dynamic vision tasks. (b) For static tasks, the SFA method has advantages in training cost (one-timestep training) and inference energy cost (the image is input only once). (c) Fot dynamic tasks, the SFA method can simultaneously mitigate the flaws of spiking neurons in both spatial representation and temporal dynamics by expanding the timestep.}
\label{Fig_SFA}
\end{figure*}

\textbf{Multiple-timesteps SFA training for dynamic vision tasks.} For temporal dynamic vision tasks, multi-timestep simulation cannot be understood as rate encoding, but the same processing as static tasks is still used in direct training. Therefore, in dynamic vision tasks, the spatial representation and temporal dynamics defects of the binary spike firing will erupt simultaneously in direct training. 

SFA training overcomes these two flaws by expanding the timestep to $T \times D$ during inference: First, integer values reflect the importance of spatial inputs; Second, the size of the integers is determined by the spatial inputs, and thus is a data-dependent mechanism, equivalent to the fact that the spiking neurons can be dynamically forgetting (soft reset). In summary, each timestep in SFA training is expanded into $1\times D$ timesteps, which can be viewed as executing two reset mechanisms simultaneously. When the timestep is within the range of $[(t-1)\times D +1, t \times D]$, IF-SR neurons are used. When the timestep is at $[1,\cdots, (t-1)\times D, t\times D, \cdots, T\times D]$, LIF-HR/LIF-SR/IF-HR/IF-SR dynamic can be applied. We will discuss how to implement this hybrid spiking neuron mechanism in the neuromorphic chip in Section~\ref{sec_hardware_analysis}.

\subsection{Approximation Error Analysis}\label{sec:error_analysis}
Using finite integer values to approximate the continuous spike firing rate introduces an approximation error. The spike firing rate of the $l$-th layer over $D$ timesteps is:
\begin{equation}
    a_D^l = \frac{1}{D}\mathbf{S}^l_D = \frac{1}{D}\text{Fire}_D(\mathbf{U}^l) = \frac{1}{D}\lfloor\text{clip}\{\mathbf{U}^l, 0, D\}\rceil.
    \label{eq:spike_firing_rate_with_integer}
\end{equation}

\begin{definition}[Forward Approximation Error]
\label{Quan_error_one_timestep}
    Given the continuous input membrane potential at $l$-layer, denoted as $\mathbf{U}^l[d]$, we define the approximation error as follows:
\begin{equation}
    \text{Err}^{l} = \text{ReLU}(\mathbf{U}^l[d]) - \mathcal{F}(\mathbf{U}^l[d]),
\end{equation}
where $\text{ReLU}(\cdot)$ represents the ReLU function and $\mathcal{F}(\cdot)$ is the approximation function. In the ideal scenario $\mathcal{F}(\mathbf{U}^l[d])=\text{ReLU}(\mathbf{U}^l[d])$, the approximation error equals to zero.
\end{definition}

Since the approximation function in SFA is $\mathcal{F}(\cdot)=\frac{1}{D}\text{Fire}_D(\cdot)$, according to the Definition \ref{Quan_error_one_timestep}, the approximation error is :
\begin{equation}
    \text{Err}_{\text{SFA}}^{l} = \text{ReLU}(\mathbf{U}^l[d]) - \frac{1}{D}\lfloor\text{clip}\{\mathbf{U}^l[d], 0, D\}\rceil.
    \label{eq:Quan_error_one_timestep}
\end{equation}
Continuing with the following deductions:
\begin{equation}
 \text{Err}^{l}_{\text{SFA}} = \begin{cases} 0, & \text{if } \mathbf{U}^l[d] < 0 \\ \mathbf{U}^l[d] -\frac{1}{D}\lfloor \mathbf{U}^l[d] \rceil, & \text{if } 0\leq \mathbf{U}^l[d] <D \\ \mathbf{U}^l[d] - 1, & \text{if } \mathbf{U}^l[d] \geq D 
 \label{q_error_s}\end{cases} 
\end{equation}

In the backpropagation stage, the SFA (Eq.~\eqref{eq:integer_replace_spike_train}) is non-differentiable. Previous studies have introduced various surrogate gradient functions \cite{wu2018spatio,fang2023spikingjelly}, primarily designed to address binary spike outputs. In our approach, we consistently utilize rectangular windows as the surrogate function. For simplicity, we retain gradients only for neurons activated within the $[0, D]$ range, setting all others to zero.
\par

Using rectangular windows as the surrogate function cannot avoid gradient error. We define this gradient error as the difference between the integral of the surrogate gradient function and the forward approximation function of SFA, which can be expressed as follows:
\begin{definition}[Backward Gradient Error]
\label{grad_error}
Given the continuous input membrane potential at $l$-layer, denoted as $\mathbf{U}^l[d]$, we define the gradient  error as follows:
\begin{equation}
     \text{Err}^{l} =  \int_{-\infty}^{\infty} \text{Rect}_{[0,D]}(\mathbf{U}^l[d]) -\mathcal{F}(\mathbf{U}^l[d]),
     \label{grad}
\end{equation}
where $\text{Rect}_{[0,D]}(\cdot)$ is rectangular windows between 0 and $D$. $\mathcal{F}(\cdot)$ is the approximation function. In the ideal scenario, $\nabla \mathcal{F}(\mathbf{U}^l[d])=\text{Rect}_{[0,D]}(\mathbf{U}^l[d])$, the gradient error equals zero.
\end{definition}
According to the Definition \ref{grad_error}, the gradient error is:
\begin{equation}
    \text{Err}_{\text{SFA}}^{l} = \int_{-\infty}^{\infty} \text{Rect}_{[0,D]}(\mathbf{U}^l[d]) - \frac{1}{D}\lfloor\text{clip}\{\mathbf{U}^l[d], 0, D\}\rceil.
    \label{eq:grad_error_one_timestep}
\end{equation}
Continuing with the following deductions:
\begin{equation}
 \text{Err}^{l}_{\text{SFA}} = \begin{cases} 0, & \text{if } \mathbf{U}^l[d] < 0 \\ \mathbf{U}^l[d] -\frac{1}{D}\lfloor \mathbf{U}^l[d] \rceil, & \text{if } 0\leq\mathbf{U}^l[d] <D  \\ 
 1, & \text{if } \mathbf{U}^l[d] \geq D 
 \label{g_error_s}\end{cases} 
\end{equation}

In this subsection, we examine the approximation and gradient errors introduced by the $\text{Fire}_D(\cdot)$ function in SFA as depicted in Equations \eqref{q_error_s} and \eqref{g_error_s}. Our analysis reveals that larger values for the timestep $D$ lead to reduced errors. It should be noted that large timesteps may boost latency and energy consumption. 

\begin{table*}[!t]
\centering
\caption{Comparison with previous works on ImageNet-1k. The default inference input resolution is 224$\times$224. * The input crops are enlarged to 384$\times$384 in inference. We rewrite the timesteps of all direct training SNNs into the format of $T \times D$, where $T$ refers to the timestep and $D$ is the upper limit of the integer activation during training. In previous SNNs, $D$ defaulted to 1. Note, ``Param" and ``Step" in all Table headers of this paper denote ``Parameters" and ``Timestep", and all ``Step" involved in direct training SNNs are written in the form of $T \times D$ by default.}
\begin{tabular}{ccccccc}
\toprule
Methods & Architecture & Spike-driven & Param (M) & Power (mJ) & Step & Top-1 Acc.(\%) \\
\midrule
\multirow{10}{*}{ANN}  & DeiT-S\cite{touvron2021training}& \ding{55} & 22.0 & 21.2 & 1 & 79.8\\
& PVT-S\cite{wang2021pyramid} & \ding{55} & 24.5 & 35.0 & 1 & 79.8 \\
& Swin-T\cite{liu2021swin} & \ding{55} & 29.0 & 20.7 & 1 & 81.3 \\
& {{CAFormer-M36\cite{yu2023metaformer}}} &\ding{55} & 56.0 & 60.7 & 1 & 85.2 \\
& T2T-$\text{ViT}_t$-24\cite{yuan2021tokens} & \ding{55} & 64.1 & 69.0 & 1 & 82.6\\
& RepLKNet-31B\cite{ding2022scaling} & \ding{55} & 79.0 & 70.4 & 1 & 83.5 \\
& DeiT-B\cite{touvron2021training} & \ding{55} & 86.0 & 80.5 & 1 & 81.8 \\
& FocalNet-B (LRF)\cite{yang2021focal} & \ding{55} & 88.7 & 70.8 & 1 & 83.9 \\
& ConvNeXt-B\cite{liu2022convnet} & \ding{55} & 89.0 & 70.8 & 1 & 83.8 \\
& Focal-B\cite{srinivas2021bottleneck} & \ding{55} & 89.8 & 73.6 & 1 & 83.8 \\
& CrossFormer-L\cite{wang2023crossformer++} & \ding{55} & 92.0 & 74.1 & 1 & 84.0 \\
\midrule
\multirow{6}{*}{ANN2SNN} & ResNet-34\cite{Rathi2020Enabling} & \ding{51} & 21.8 & - & 250 & 61.5\\
& Optimal ANN2SNN\cite{bu2021optimal} & \ding{51} & - & - & 1024 & 74.3\\
 & Tandem VGG-16\cite{wu2021progressive} & \ding{51} & - & - & 16 & 73.0\\
& Fast-SNN\cite{10254579} & \ding{51} & 138.4 & - & 7 & 73.0\\
& Masked spiking Transformer\cite{wang2023masked} & \ding{55} & 28.5 & - & 512 & 78.5\\
& Two-stage ANN2SNN\cite{10361844} & \ding{51} & - & - & 128 & 74.9\\
\midrule
\multirow{11}{*}{\begin{tabular}[c]{@{}c@{}}Direct\\   training\end{tabular}} & SEW-ResNet\cite{fang2021deep} & \ding{55} & 60.2 & 12.9 & $4\times1$  & 69.2     \\
& TET-ResNet\cite{deng2022temporal}  & \ding{55}  & 21.8 & - & $4\times1$  &  68.0    \\
& DSR-ResNet\cite{meng2022training}  & \ding{55}  & 11.2 & - & $50\times1$  &  67.7    \\
& GAC-MS-ResNet\cite{qiu2024gated} &  \ding{51}  & 21.93 & 2.2 & $4\times1$ & 69.7  \\
& MS-ResNet\cite{hu2024advancing} &  \ding{51}  & 77.3 & 10.2 & $4\times1$ & 75.3  \\
 & Att-MS-ResNet\cite{yao_attention_Pami} & \ding{55} & 78.4 & 7.3 & $4\times1$ & 77.1 *     \\
 
\cline{2-7}
 & Spikformer\cite{zhou2023spikformer} & \ding{55} & 66.3 & 21.5 & $4\times1$  & 74.8     \\
 & Spike-dirven Transformer\cite{yao2023spike}  & \ding{51}   & 66.3 & 6.1 & $4\times1$  & 76.3     \\
 & Spikformer v2\cite{zhou2024spikformer} & \ding{55} & 172.7 & 25.6 & $4\times1$  & 82.4     \\  \cline{3-7}
& \multirow{3}{*}{Meta-SpikeFormer\cite{meta_spikeformer}} & \ding{51} & 15.1  & 16.7 & $4\times1$ & 74.1     \\
&   &  \ding{51}  & 31.3  & 32.8 & $4\times1$ & 77.2     \\
&  &  \ding{51}  & 55.4 & 52.4  & $4\times1$ & 80.0    \\ \hline

\multirow{11}{*}{\begin{tabular}[c]{@{}c@{}}Direct\\   training\end{tabular}} & \multirow{11}{*}{\begin{tabular}[c]{@{}c@{}}E-SpikeFormer\\ \textbf{(This Work)}\end{tabular}}  & \ding{51}  & 5.1 & 1.7 & $1\times4$ & 75.3 \\
 &  & \ding{51}  & 10.0 & 3.0 & $1\times4$ & 78.5 \\
 &  & \ding{51}   & 19.0  & 5.9 & $1\times4$ & 79.8 \\ 
  &  & \ding{51}  & 52.0 & 14.4  & $1\times4$ & 80.6 \\ \cline{3-7}
 &  & \ding{51}  & 83.0 & 19.1  & $1\times4$ & 83.2 \\
 &  & \ding{51}  & 83.0 & 30.9  & $1\times8$ & 84.0 \\
 &  & \ding{51}  & 83.0 &   -  & $1\times8$ & 85.2 * \\ \cline{3-7}
 &  &  \ding{51}  & 173.0  & 35.6 & $1\times4$ & 84.7 \\
 &  &  \ding{51}  & 173.0  & 54.7 & $1\times8$ & 85.1 \\
 &  &  \ding{51}  & 173.0  & - & $1\times8$ & \textbf{86.2} * \\
\bottomrule
\end{tabular}
\label{tab:image classify}
\end{table*}

\begin{figure}[!t]
\centering
\includegraphics[scale=0.32]{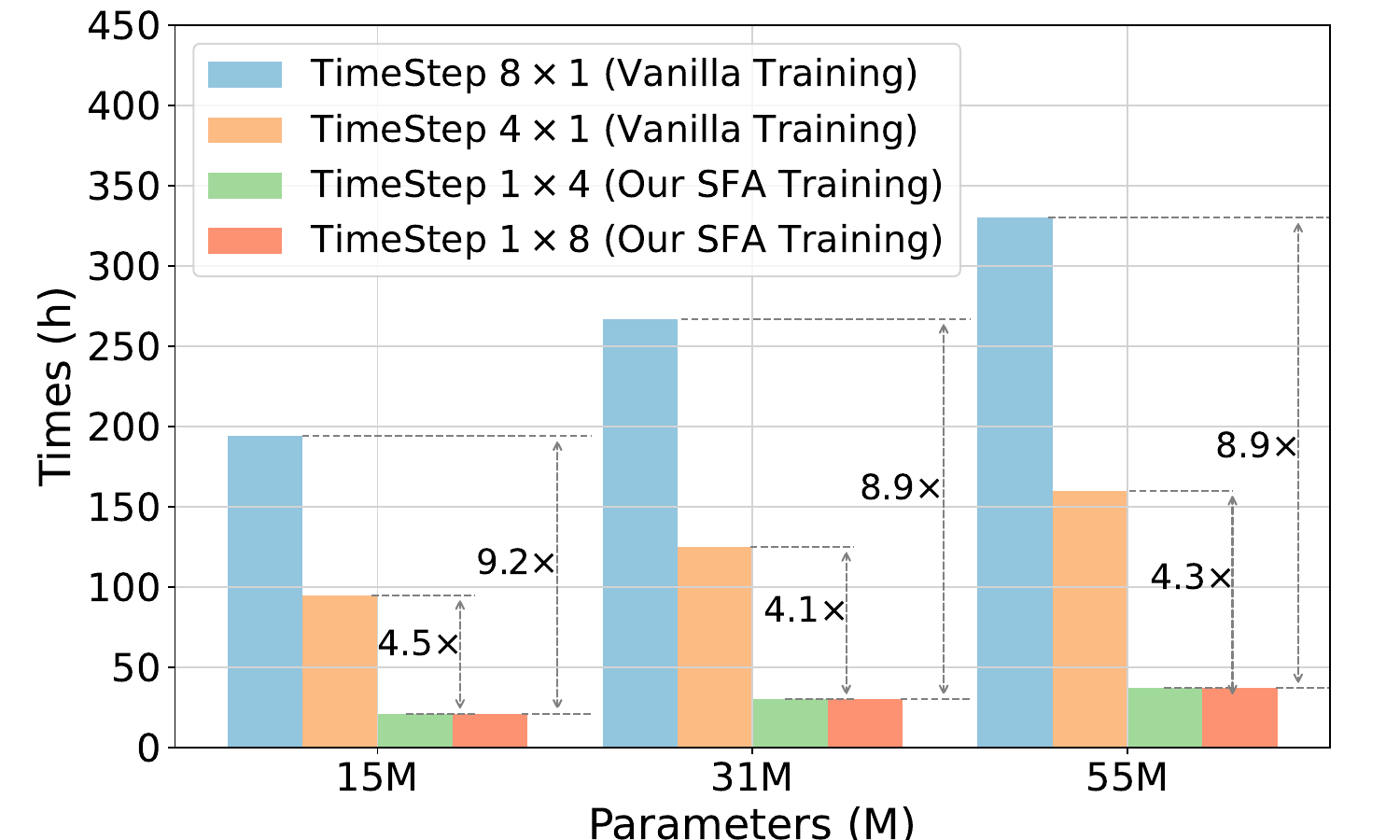}
\caption{Vanilla and SFA training time on the ImageNet with 200 epochs on 8 NVIDIA-A100-40G. On the static dataset, we use $T=1$ during training and extend it to $D$ timesteps during inference. Since we exploit one timestep training, the training cost is significantly reduced.}
\label{fig_speed_up}
\end{figure}


\section{Experiments}\label{section_Experiment}
In Section~\ref{subsec_main_reslut}, we report the main results of the proposed methods across various vision tasks. In Section~\ref{subsec_ablation_study}, we conduct ablation studies on SFA, E-SpikeFormer, and MIM pre-training. Sections~\ref{subsec_spike_pattern} and Sections~\ref{subsec_SFA_scale} analyze the impact of SFA in performance, power and scalability of SNNs. 

\subsection{Main Results}\label{subsec_main_reslut}
We evaluate our methods on various tasks, including image classification (ImageNet-1k \cite{deng2009imagenet}), object detection (COCO \cite{lin2014microsoft}), semantic segmentation (ADE20K \cite{zhou2017scene}), neuromorphic (HAR-DVS \cite{wang2022hardvs}, \rone{CIFAR10-DVS \cite{Li_Cifar10_DVS_2017}}) tasks. We follow the pre-training and fine-tuning approach in \cite{meta_spikeformer} to test E-SpikeFormer on COCO and ADE20K datasets. The model trained on ImageNet is used to fine-tune the detection or segmentation heads. Power evaluation methods and details of training/configurations are given in the Section~S1 and Section~S2 of Supplementary Materials, respectively.

\subsubsection{Image Classification}\label{subsubsec:image_classify}
The proposed methods are validated at 6 model scales (Table~\ref{tab:image classify}) on ImageNet-1K \cite{deng2009imagenet}, a prominent dataset with 1.3M training and 50K validation images across 1K classes. 

\textbf{Compared with prior SNN Baselines.} We comprehensively compare our E-SpikeFormer with prior SNNs in accuracy, parameter, and power. E-SpikeFormer achieves SOTA with clear all-around advantages, e.g., \textbf{E-SpikeFormer} vs. Meta-SpikeFormer \cite{meta_spikeformer} vs. MS-ResNet \cite{hu2024advancing}: Param, \textbf{19M} vs. 55M vs. 77M; Acc, \textbf{79.8\%} vs. 80.0\% vs. 76.3\%; Power, \textbf{5.9mJ} vs. 52.4mJ vs. 10.2mJ. We have brought \emph{qualitative} improvements to the SNNs' accuracy on ImageNet. In the small-scale 5M, 10M, and 19M models, we achieve 75.3\%, 78.5\%, and 79.8\%, respectively, far exceeding prior SNNs. We extend the size and performance upper limit of SNNs to 173M and \textbf{86.2\%}. The targeted efficient architecture design significantly reduces the power of our model. In Section~\ref{subsec_ablation_study}, we place the relevant ablation studies. $D$ in the SFA setting affects accuracy and power (Section~\ref{subsec_spike_pattern}) and scale (Section~\ref{subsec_SFA_scale}). Since we use one-timestep training on ImageNet, the training cost is also significantly reduced. As shown in Fig.~\ref{fig_speed_up}, we obtain a training speedup of about $9\times$ on various scales, and going from $1\times4$ to $1\times8$ does not result in an increase in training hours. \rone{In addition, we provide the GPU memory acceleration ratio of SFA method in Section~S3.}


\begin{table}[t]
\centering
\caption{Performance of detection on COCO val2017 \cite{lin2014microsoft}. * These methods exploit the pre-trained models on ImageNet as the backbone, then add detection heads for fine-tuning.}
\setlength{\tabcolsep}{1.5pt}
\begin{tabular}{cccccc}
\toprule
Methods & Architecture & \begin{tabular}[c]{@{}c@{}}Param\\      (M)\end{tabular} & \begin{tabular}[c]{@{}c@{}}Power\\      (mJ)\end{tabular} & Step & \begin{tabular}[c]{@{}c@{}}mAP\\      @50(\%)\end{tabular} \\
\midrule
\multirow{3}{*}{ANN}  & ResNet-18\cite{yu2022metaformer}  & 31.2 & 890.6 & 1  & 54.0     \\
& PVT-Tiny\cite{wang2021pyramid}  & 32.9 & 1040.5 & 1 & 56.7    \\
& DETR-DC5 \cite{zhu2020deformable} & 41.0 & 860.2 & 1 & 55.7\\
\midrule
\multirow{3}{*}{ANN2SNN}  & Spiking-Yolo\cite{kim2020spiking} & 10.2 & - & 3500 & 25.7     \\
& Spike Calibration\cite{li2022spike}    & 17.1 & - & 512 & 45.3 \\
& {{Fast-SNN \cite{hu2023fast}}}   & \rone{25.1} & \rone{-} & \rone{15} & \rone{46.4} \\
\midrule
\multirow{4}{*}{\begin{tabular}[c]{@{}c@{}}Direct\\ training\end{tabular}}  & Spiking Retina\cite{zhang2023direct}  & 11.3 & - & $4 \times 1$ & 28.5     \\
& EMS-Res-SNN\cite{su2023deep}   & 26.9 & - & $4 \times 1$ & 50.1    \\ \cline{3-6}
& \multirow{2}{*}{Meta-SpikeFormer * \cite{meta_spikeformer}}& 34.9 & 49.5 & $1 \times 1$ & 44.0     \\
&  & 75.0 & 140.8 & $1 \times 1$  & 51.2  \\
\midrule
\multirow{3}{*}{\begin{tabular}[c]{@{}c@{}}Direct\\ training\end{tabular}}
&\multirow{3}{*}{\begin{tabular}[c]{@{}c@{}}E-SpikeFormer *\\ \textbf{(This Work)}\end{tabular}} &   38.7 & 56.2& $1 \times 2$ & 41.8\\
&  &   38.7 & 94.5 & $1 \times 4$ & 58.4 \\
& & 38.7 & 119.5  & $1 \times 8$ & \textbf{58.8} \\
\bottomrule
\end{tabular}
\label{tab:object_detect}
\end{table}

\textbf{Compared with ANN Baselines.} The accuracy of E-SpikeFormer has been able to surpass or comparable to some classic ANNs while maintaining the energy efficiency advantage. For example, at the 80M scale, \textbf{E-SpikeFormer} vs. ConvNeXt-B \cite{liu2022convnet} vs. CrossFormer-L\cite{wang2023crossformer++}: Param, \textbf{83M} vs. 89M vs. 92M; Acc, \textbf{83.2\%} vs. 83.8\% vs. 84.0\%; Power, \textbf{19.1 mJ} vs. 70.8mJ vs. 74.1mJ. At the 20M scale, \textbf{E-SpikeFormer} vs. DeiT-S \cite{touvron2021training} vs. PVT-S\cite{wang2021pyramid}: Param, \textbf{19M} vs. 22M vs. 24.5M; Acc, \textbf{79.8\%} vs. 79.8\% vs. 79.8\%; Power, \textbf{5.9 mJ} vs. 21.2mJ vs. 35.0mJ. This is the first time that the SNN domain can compete with the performance of classic ANN-ViT models.

\subsubsection{Object Detection}\label{subsubsec:object_detect}
Before the emergence of Meta-SpikeFormer \cite{meta_spikeformer}, the SNN domain could only process object detection tasks through specific architectural designs, and the accuracy was low. Meta-SpikeFormer enable SNN to process dense prediction tasks such as object detection and semantic segmentation in a unified way. In this work, we test E-SpikeFormer on the COCO dataset\cite{lin2014microsoft}, which includes 118K training images (train2017) and 5K validation images (val2017). \rone{Results are shown in Table~\ref{tab:object_detect}, E-SpikeFormer achieves SOTA results in SNNs, and exceeds or is comparable to classical models in ANN but with a lower energy cost}. For example, \textbf{E-SpikeFormer} vs. Meta-SpikeFormer \cite{meta_spikeformer} vs. PVT-Tiny \cite{wang2021pyramid}: Param, \textbf{38.7M} vs. 75.0M vs. vs. 32.9M; mAP@0.5, \textbf{58.8\%} vs. 51.2\% vs. 56.7\%; Power, \textbf{119.5mJ} vs. 140.8mJ vs. 1040.5mJ.

\begin{table}[t]
\centering
\caption{Performance of segmentation on ADE20K\cite{zhou2017scene}. * These methods use the pre-trained models on ImageNet as the backbone, then add segmentation heads for fine-tuning.}
\setlength{\tabcolsep}{3.5pt}
\begin{tabular}{cccccc}
\toprule
Methods & Architecture & \begin{tabular}[c]{@{}c@{}}Param\\      (M)\end{tabular} & \begin{tabular}[c]{@{}c@{}}Power\\      (mJ)\end{tabular} & Step & MIoU (\%) \\
\midrule
\multirow{6}{*}{ANN}  & ResNet-18\cite{yu2022metaformer}   & 15.5 & 147.1 & 1 & 32.9     \\
& PVT-Tiny\cite{wang2021pyramid}    & 17.0 & 152.7 & 1 & 35.7    \\
& PVT-Small\cite{wang2021pyramid}    & 28.2 & 204.7 & 1 & 39.8    \\
& InternImage-T\cite{wang2023internimage} & 59.0 & 2171.2 & 1 & 48.1 \\
& ConvNeXt-T\cite{liu2022convnet} & 60.0 & 2173.5 & 1 & 45.8\\
& DeepLab-V3\cite{zhang2022resnest}   & 68.1 & 1240.6 & 1 & 42.7    \\
\midrule
\multirow{4}{*}{\begin{tabular}[c]{@{}c@{}}Direct\\ training\end{tabular}} & \multirow{4}{*}{\begin{tabular}[c]{@{}c@{}}Meta-\\ SpikeFormer *\cite{meta_spikeformer}\end{tabular}}   & 16.5 & 22.1 & $1 \times 1$ & 32.3     \\
&   & 16.5 & 88.1 & $4 \times 1$  & 33.6    \\
&   & 59.8 & 46.6 & $1 \times 1$  & 34.8    \\
&  & 59.8 & 183.6 & $4 \times 1$  & 35.3    \\
\midrule
\multirow{3}{*}{\begin{tabular}[c]{@{}c@{}}Direct\\ training\end{tabular}} &  \multirow{3}{*}{\begin{tabular}[c]{@{}c@{}}E-SpikeFormer *\\ \textbf{(This Work)}\end{tabular}}
&  11.0 & 18.4 & $1 \times 2$ & 31.9 \\
&  & 11.0 & 27.2& $1 \times 4$ &40.1\\
& &  11.0 & 33.6 & $1 \times 8$ & 41.4 \\
\bottomrule
\end{tabular}
\label{tab:semantic seg}
\end{table}

\begin{table}[t]
\caption{Performance of neuromorphic human action recognition on HAR-DVS \cite{wang2022hardvs}.}
\begin{center}
\setlength{\tabcolsep}{3.5pt}
\begin{tabular}{cccccc}
\toprule
\multicolumn{1}{c}{ Methods} &\multicolumn{1}{c}{Architecture}
&\begin{tabular}[c]{@{}c@{}}Param\\      (M)\end{tabular}
& \begin{tabular}[c]{@{}c@{}}Power\\      (mJ)\end{tabular} & Step 
&\multicolumn{1}{c}{ Acc.(\%)}\\
\midrule
\multirow{3}{*}{ANN}  & SlowFast \cite{feichtenhofer2019slowfast} & 33.6&- &$8 \times 1$& 46.5 \\
   & ACTION-Net \cite{wang2021action}   & 27.9&- & $8 \times 1$  & 46.9 \\
    & TimeSformer \cite{bertasius2021space}   & 121.2 &- & $8 \times 1$ & 50.8 \\
\midrule
\multirow{3}{*}{\begin{tabular}[c]{@{}c@{}}Direct\\ training\end{tabular}}  & Res-SNN-34 \cite{fang2021deep}   & 21.8&- &  $8 \times 1$  & 46.1\\
& ASA-SNN \cite{yao2023inherent}   & 41.5 &-&  $8 \times 1$ & 47.1\\
& Meta-SpikeFormer\cite{meta_spikeformer}  & 18.3 & 8.0&$8 \times 1$ & 47.5\\
\midrule
\multirow{2}{*}{\begin{tabular}[c]{@{}c@{}}Direct\\ training\end{tabular}}  & E-SpikeFormer  & 18.7 &18.1 & $8 \times 4$& 48.9\\
& (\textbf{This Work})& 18.7 &23.5 &$8 \times 8$ & \textbf{49.2}\\
\hline
\end{tabular}
\end{center}
\label{table_har_dvs}
\end{table}

\begin{table}[t]
    \centering
    \caption{Ablation studies of SFA and architecture design. }
    \tabcolsep=0.15cm
    \begin{tabular}{ccccc}
    \toprule
        Methods  & Param (M) & Step & Power (mJ) & Acc.(\%)  \\
        \midrule
        \multirow{2}{*}{Baseline \cite{meta_spikeformer}}   & 10.0 & 1 $\times$ 1 & 3.3 & 68.5 \\
         & 10.0 & 4 $\times$ 1& \textbf{11.9} & \textbf{71.3} \\ \hline
        + Our SFA training & 10.0 & 1 $\times$ 4 & 7.3 \textbf{(-4.6)} & 77.8 \textbf{(+6.5)} \\
        + Our architecture & 10.0 & 1 $\times$ 4 & 3.0 \textbf{(-4.3)} &  78.5 \textbf{(+0.7)}\\
        \bottomrule
    \end{tabular}
    \label{tab:ablation_sfa}
\end{table}

\subsubsection{Semantic Segmentation}\label{subsubsec:semantic_seg}
ADE20K \cite{zhou2017scene} is a challenging semantic segmentation benchmark, including 20K and 2K images in the training and validation set, respectively, and covering 150 categories. Meta-SpikeFormer \cite{meta_spikeformer} reported the first results of SNN on the ADE20K dataset. We follow the experimental setting in Meta-SpikeFormer to test our E-SpikeFormer on ADE20K. Our model performs well on ADE20K. For example, \textbf{E-SpikeFormer} vs. Meta-SpikeFormer \cite{meta_spikeformer}: Param, \textbf{11.0M} vs. 59.8M ; MIoU, \textbf{41.4\%} vs. 35.3\%; Power, \textbf{33.6mJ} vs. 183.6mJ. 


\subsubsection{Nuromorphic Human Action Recognition}
HAR-DVS \cite{wang2022hardvs} is the largest event-based Human Activity Recognition (HAR) dataset currently, containing 300 classes and 107,646 samples, acquired by a DAVIS346 camera with a spatial resolution of $346\times260$. We use HAR-DVS to evaluate E-SpikeFormer, data preprocessing and experimental design (please see Section~S2.2) follow the previous Meta-SpikeFormer. We handle HAR-DVS in a direct training manner with $T=8$. Meta-SpikeFormer achieves comparable accuracy to ANNs and is better than prior SNNs (Table~\ref{table_har_dvs}). \rone{We also verify the effectiveness of our methods on neuromorphic CIFAR10-DVS, and the experimental results are given in the supplementary materials (Section~S4).}

\subsection{Ablation Study}\label{subsec_ablation_study}
In Table~\ref{tab:ablation_sfa}, we conduct rigorous ablation experiments to verify the gains of SFA training and E-SpikeFormer compared to the baseline (Meta-SpikeFormer \cite{meta_spikeformer}) on ImageNet. 

\romannumeral1) Baseline vs. \textbf{Baseline+SFA}: Step, $4 \times 1$ vs. $1 \times 4$; Power, 11.9mJ vs. \textbf{7.3mJ(-4.6mJ)}; Acc, 71.3\% vs. \textbf{77.8\%(+6.5\%)}. This implies that the strategy of SFA is far more effective and efficient than existing directly training SNN.  

\romannumeral2) We advance the SNN architecture, Baseline+SFA vs. \textbf{E-SpikeFormer+SFA}: Power, 7.3mJ vs. \textbf{3.7mJ(-4.3mJ)}; Acc, 77.8\% vs. \textbf{78.5\%(+0.7\%)}. The proposed efficient architecture design further improves accuracy while reducing power. 

In addition, we observe the effect of SFA method on energy cost, and conduct in depth analysis in Section~\ref{subsec_spike_pattern}. Together, by combining SFA and E-SpikeFormer, compared with the 10M baseline \cite{meta_spikeformer}, we simultaneously achieve a \textbf{7.2\%} performance improvement, \textbf{5.3}$\times$ training time acceleration (12 hours vs. 63 hours on 8 NVIDIA-A100-40G with 200 epoch), and \textbf{3.7}$\times$ inference energy efficiency. 

\rone{Due to space limitations, ablation experiments on MIM training are placed in Section~S5 of the supplementary materials, including MIM mask ratio, decoder width and depth, and studies on MIM and distillation.}

\begin{figure*}[!ht]
\centering
\includegraphics[scale=0.65]{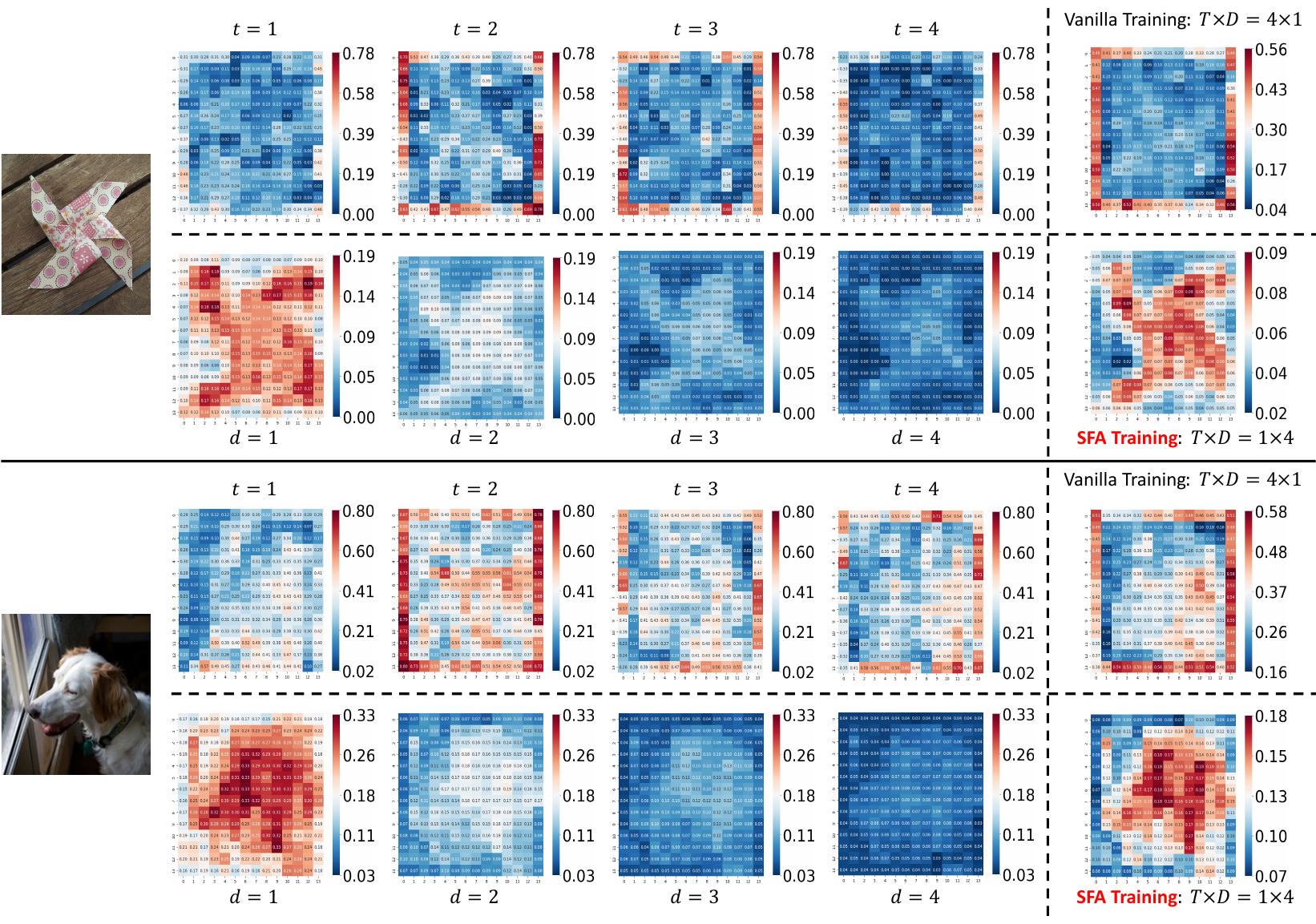}
\caption{Visualization of spike distribution in vanilla and SFA training on ImageNet. From top to bottom: vanilla and SFA training. For a single sample, the output spike maps of a certain SNN layer in the form of the 4D ($[T, C, H, W]$). We first averaged the 4D spike maps into a 3D map ($[T, H, W]$) over the channel dimension. Since $T=4$, we can split the 3D map into four 2D maps (the four maps in the middle of this figure). Then, we averaged the 4D spike maps into a 2D map ($[H, W]$) over the temporal and channel dimension, resulting in the two rightmost spike maps. For a single map, each pixel represents the firing rate of a spiking neuron. The redder the pixel, the higher the spike firing rate; the bluer the pixel, the closer the spike firing rate is to 0. The four spike maps in the middle show the differences in the temporal dimension between the spike distributions obtained by vanilla and SFA training: spike firing rate does not change much from left to right in vanilla training (first row); spike firing rate decreases from left to right in SFA training (second row). The two spike maps on the far right show that the spike features obtained by SFA training are clearer, and the firing rate is lower.}
\label{Fig_spike_map}
\end{figure*}




\subsection{Advantages of SFA in Power and Performance}\label{subsec_spike_pattern}

\begin{figure}[t]
    \centering
    \includegraphics[scale=0.23]{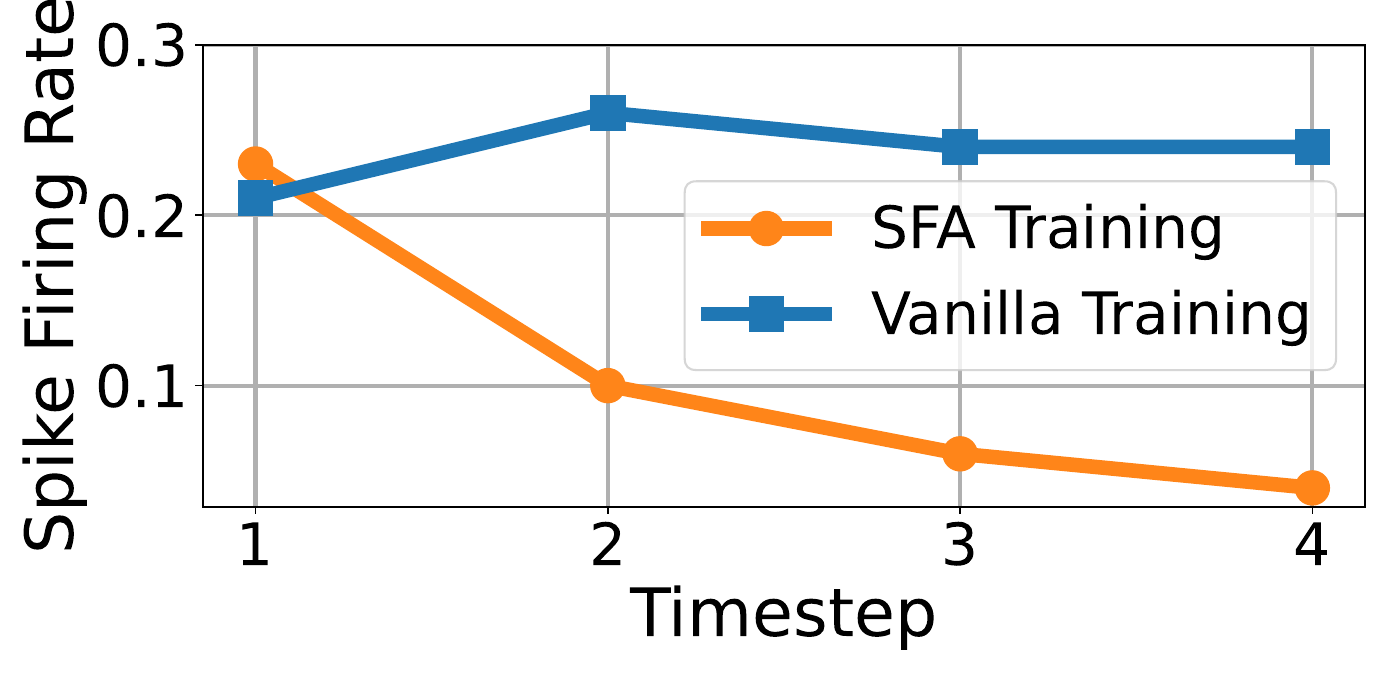}
    \caption{Network Spike Firing Rate (NSFR) of vanilla ($T \times D = 4 \times 1$) training and SFA ($T \times D = 1 \times 4$) training at each timestep. Experimental settings are the same as the ablation studies in Table~\ref{tab:ablation_sfa}. The sum of NSFR over all timesteps implies the average spike firing profile over the entire dataset: in vanilla training and SFA training, each spiking neuron fires an average of 0.96 and 0.39 spikes, respectively.}
    \label{Fig_SFR}
\end{figure}

Compared with ANN2SNN and direct training, the SFA method implies a novel spike firing pattern (Fig.~\ref{Fig_spike_firing_pattern}). We here focus on the following question: ``How does the SFA method optimize the power and performance of SNNs?"

We primarily examine the spike distribution under different spike firing patterns. First, we define the \emph{Network Spike Firing Rate (NSFR)}: at timestep $t$, SNN's NSFR is the ratio of spikes produced over all the neurons to the total number of neurons in this timestep. NSFR is used to evaluate the change of the same network's spike distribution at different timesteps. By default, we count spike firing rates of the samples in the test set and get their mean as the final data. Second, we \emph{visualizes the spike maps} of the last layer of the last block of the 10M model in Fig.~\ref{Fig_spike_map}. The output spike maps of a certain SNN layer in the form of the 4D ($[T, C, H, W]$), which can be averaged into a 3D map ($[T, H, W]$) over the channel dimension. Since $T=4$, we can split the 3D map into four 2D maps, which are the four maps in the middle of Fig.~\ref{Fig_spike_map}. Each pixel of spike map represents the firing rate of a spiking neuron. The redder the pixel, the higher spike firing rate; the bluer the pixel, the closer the spike firing rate is to zero. Then we have the following observations:


\romannumeral1) \textbf{Power optimization.} Fig.~\ref{Fig_SFR} shows the NSFRs of vanilla and SFA training SNNs. In vanilla training, NSFR at each timestep is close (the blue line). In contrast, the NSFR of the SNN trained by SFA method will decrease as the timestep increases (the orange line). Fig.~\ref{Fig_spike_map} shows the spike distributions for the two samples obtained using the vanilla and SFA training. It can be seen that the color of the spike map obtained by SFA training (second row of each sample) changes from red to blue from left to right, which means that as the timestep increases, the spike firing rate decreases. In contrast, the spike map obtained by vanilla training (first row of each) does not change much in color. This phenomenon coincides with the data in Fig.~\ref{Fig_SFR}. And, the highest spike firing rates of spiking neurons (the value of the reddest pixel in the figure) obtained by SFA and vanilla training are 0.19 and 0.78, respectively.

\romannumeral2) \textbf{Performance optimization.} In Fig.~\ref{Fig_spike_map}, the 4D spike maps can also be averaged into a 2D map ($[H, W]$) over the temporal and channel dimension, resulting in the two rightmost spike maps. It can be observed that compared with vanilla training, the spike feature obtained by SFA training are clearer and the firing rate is lower.

\begin{figure}[t]
\centering
\subfigure[MIM training]{\label{fig_degrad}\includegraphics[scale=0.13]{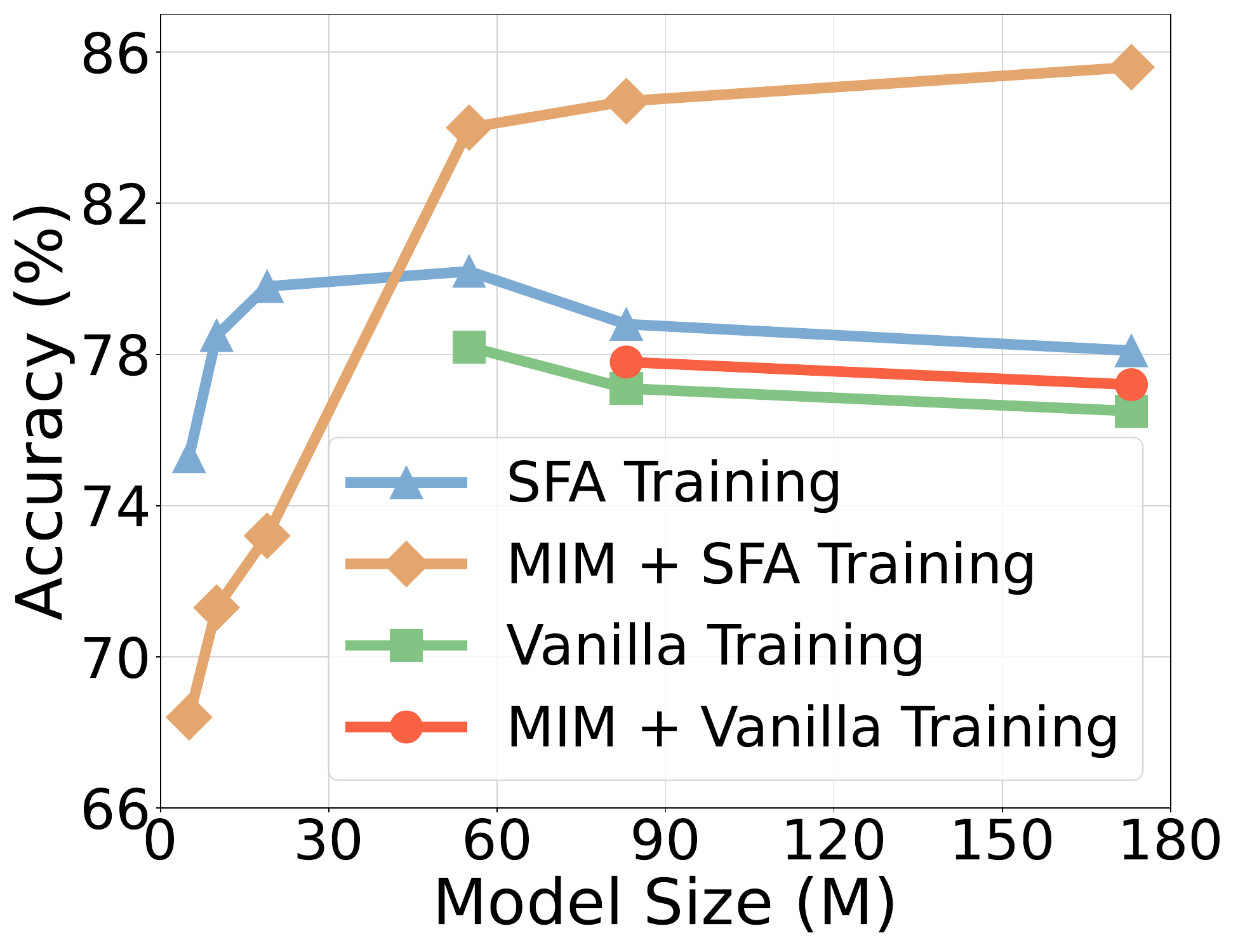}}
\subfigure[Effective rank analysis]{\label{fig_rank} \includegraphics[scale=0.13]{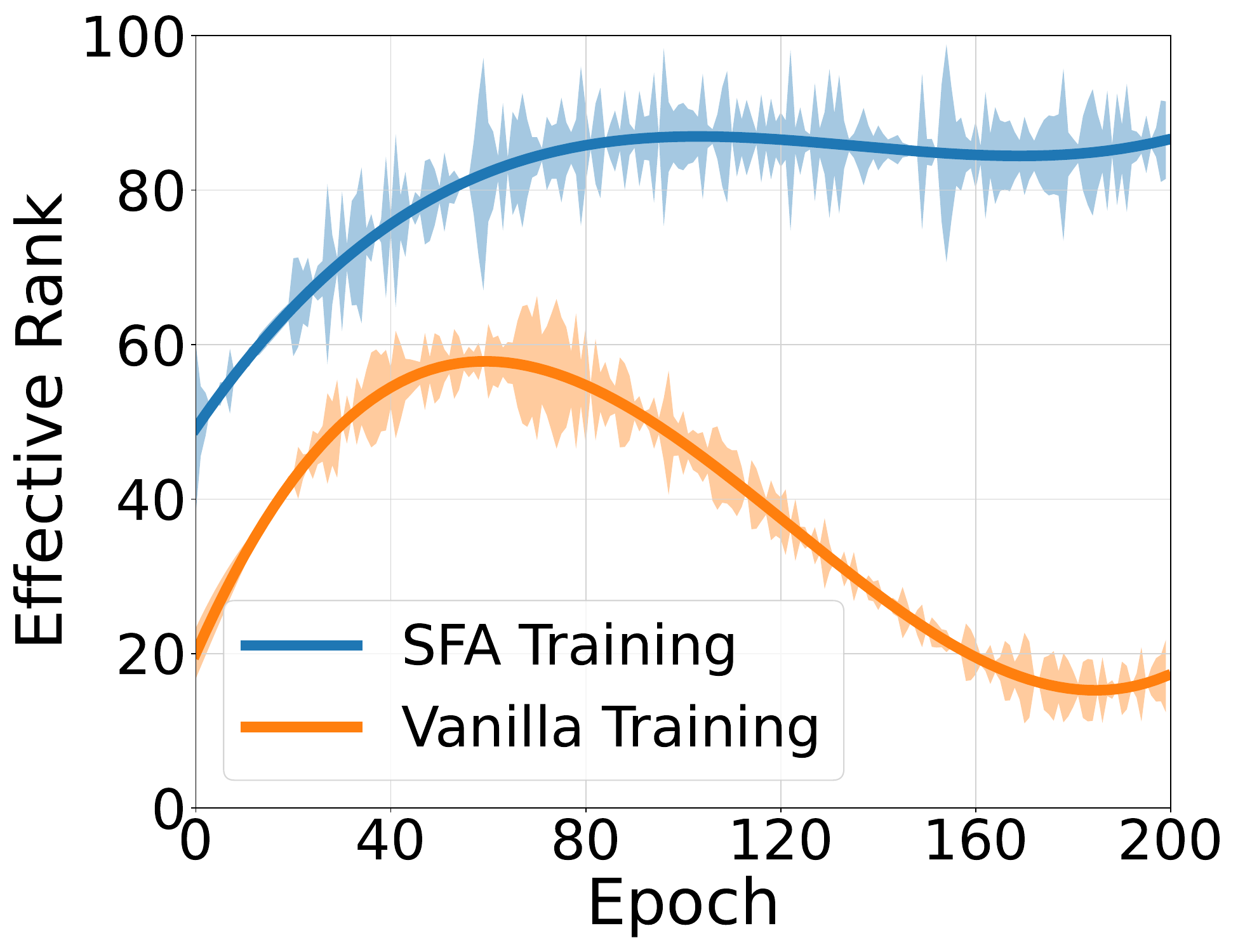}}
\caption{Binary spike firing interferes with the scaling of SNNs. (a) Ablation experiments on MIM pre-training and SFA training. In direct training with binary spike firing, performance degradation still occurs exploiting MIM pre-training. By contrast, in SFA training using integer activations, MIM pretraining effectively scales up SNNs. (b) Experimental results on effective rank \cite{roy2007effective} in MIM-SNN.}
\end{figure}

\subsection{Binary Spike Firing Mechanism is NOT Conducive to the Scaling of SNNs}\label{subsec_SFA_scale}

We analyze why directly scaling E-SpikeFormer results in performance degradation from the perspective of encoder design in Section~\ref{subsection_scale_training}. The issue arises due to vanilla spike convolution leaking information after the mask operation in MIM is superimposed. Spike sparse convolution is proposed to address this. Moreover, we observe that binary spike firing also interferes with the scaling of SNNs. As shown in Fig.~\ref{fig_degrad}, we analyze the scaling of SNNs from two orthogonal perspectives: with and without MIM-SNN pre-training and SFA versus direct training. As shown by the orange line, the MIM strategy can effectively help SNNs scaling when trained using the SFA method. However, in the vanilla training with binary spikes, performance degradation still occurs after using the MIM strategy (red line).

We argue that the potential reason is that the binary spikes affect the feature extraction ability of the encoder part in MIM-SNN. Specifically, the feature extraction capability can be measured by the effective rank~\cite{zhang2022mask,roy2007effective}:  
\begin{equation}
    \text{Rank}(\mathbf{A}) = \exp(-\sum\limits_{k=1}^{n} p_k \log(p_k)),
    \label{rank}
\end{equation}
where $p_k = \frac{\sigma_k}{\Vert \sigma \Vert_1}$ is the distribution of singular values. $\sigma_1,\dots,\sigma_n$ is the singular value on the diagonal matrix $\Sigma$. It can be obtained by the SNN encoder's output $\mathbf{Z}$, where $\mathbf{Z}=\mathbf{U}\Sigma \mathbf{H}^{T}$. A high effective rank indicates that rich feature information is being extracted, while a low effective rank suggests the opposite. 

Experimental results are given in Fig.~\ref{fig_rank}. In MIM-SNN with vanilla training, the effective rank decreases over training (Fig. \ref{fig_rank}), suggesting collapsing feature representation into a lower-dimensional space, potentially harming generalization and task performance. This could lead to difficulty distinguishing input samples, a lack of complexity capture, and performance degradation. By contrast, in MIM-SNN with SFA, the effective rank increases and tends to be stable during training, indicating robust representations and better fine-tuning performance.

\section{Discussion and Summary}\label{sec_discussion}

This section discusses the way in which the SFA method and MIM-based pre-trained SNNs can be executed on neuromorphic chips. And, we summarize the profound effects of changes in spike firing patterns induced by the SFA method.

\subsection{Implementation of SFA on Neuromorphic Chip}\label{sec_hardware_analysis}

\begin{figure}[t]
\centering
\includegraphics[scale=0.95]{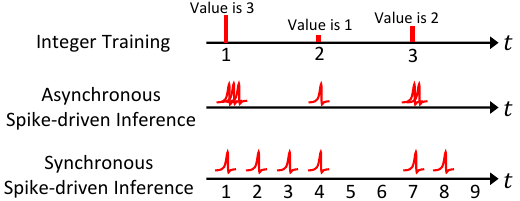}
\caption{Possible implementations of spike-driven inference on neuromorphic chips. In the figure, it is assumed that $T=D=3$. Asynchronous spike-driven has no global clock~\cite{Speck}, and the integer $D$ is converted into $D$ spikes fired in a very short time window during inference. Synchronous spike-driven expands the $T$ timesteps in integer training to $T \times D$ timesteps during inference.}
\label{Fig_chip_imp}
\end{figure}

The spike firing patterns for vanilla and SFA training are different (Fig.~\ref{Fig_spike_firing_pattern}). In ANN2SNN and vanilla direct training, the spikes at multi-timesteps can be considered random. By contrast, the spikes in SFA training are concentrated and continuous. This is crucial and implies that the mechanisms of implementation of these two methods on neuromorphic chips are different. 

Fig.~\ref{Fig_chip_imp} shows in what way the proposed SFA method can be executed on a neuromorphic chip:

\romannumeral1) \textbf{Synchronous implementation.} In this case, integers are synchronously converted to spikes. Specifically, since synchronous neuromorphic chips contain a global clock, $T$ timesteps are re-expanded into $T \times D$ timesteps. In static visual tasks, only the timesteps during inference have changed. In dynamic visual tasks, different spiking neurons are required at different timesteps. Tianjic theoretically supports these operations~\cite{Nature_1}.

\romannumeral2) \textbf{Asynchronous implementation.} In this case, the implementation of ``a single timestep is expanded $D$ times" on the chip is ``the asynchronous neuromorphic chip emits a series of spikes (up to $D$) in a very short time window." Asynchronous neuromorphic chips do not have a global clock, so there is actually no concept of timestep (please refer to the ``Synchronous Training and Asynchronous Deployment" part in \cite{Speck} for more details), and the spike firing between different spiking neurons is independent. When there is an event input, a part of spiking neurons on the chip will be activated, and the rest of the time they are in a idle state. And, there is no need to switch the working mode of the spiking neuron in order to extend the additional timesteps. Speck~\cite{Speck} supports this operation.

We make the following discussion about the pros and cons of implementing SFA methods synchronously or asynchronously:

\romannumeral1) The SFA method is implemented by either synchronous or asynchronous neuromorphic chips. However, vanilla training SNNs are difficult to implement with asynchronous chips. As shown in Fig.~\ref{Fig_spike_firing_pattern}, vanilla training SNNs must execute all timesteps to reach the spike firing approximation, thus requiring a globally synchronized clock. By contrast, SFA training SNNs only need to complete the fire of a consecutive string of $D$ spikes, and the spiking neurons can be rested. No global clock is required.

\romannumeral2) When $D$ is small, as well as the spike firing rate is small, the asynchronous spike-driven implementation of SFA method certainly has outstanding advantages in terms of power and latency. The risk point is that when $D$ is not limited, the communication bandwidth of the chip may be exceeded in the condition of a large number of spikes being fired in a short period of time. But this risk is controllable. 

Therefore, considering the unique characteristics of the SFA method, it is evident that it is more compatible with asynchronous implementation, which has significant implications for the design and development of algorithm-driven neuromorphic chips. \rone{Another interesting and noteworthy point is that for the SFA method, an increase in $D$ may lead to higher energy consumption, but the extent of this increase is closely related to the specific task. For instance, adjusting $1\times4$ to $1\times8$ results in a significantly higher increase in energy cost in image classification (Table~\ref{tab:image classify}) than in detection (Table~\ref{tab:object_detect}) and segmentation (Table~\ref{tab:semantic seg}) tasks.}

\begin{figure}[t]
    \centering
    \includegraphics[width=1\linewidth]{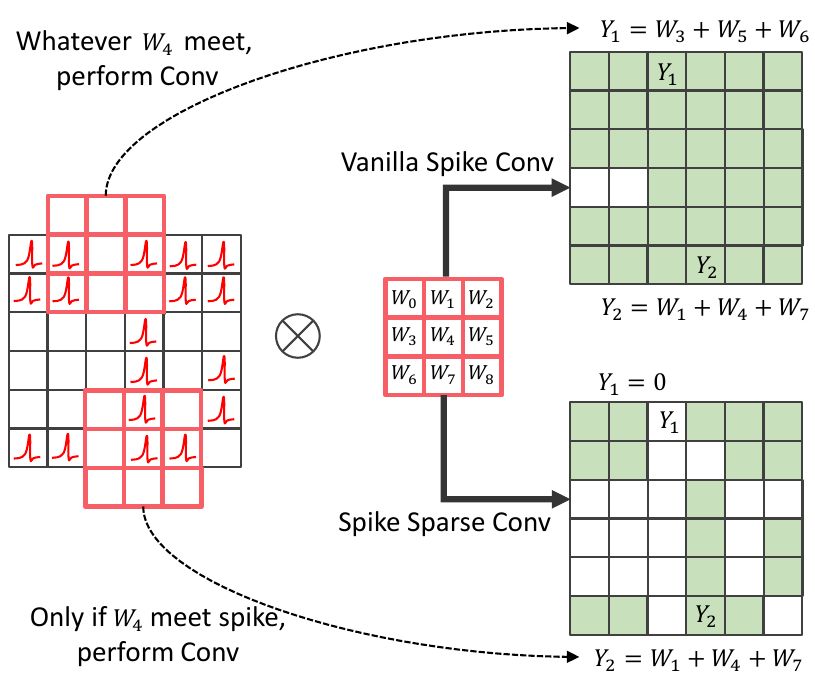}
    \caption{Comparison of Spike Sparse Conv (SSC) and  Vanilla Spike Conv (VSC). On a neuromorphic chip, when a spike occurs, the address mapping function finds the synapses and neurons that need to be added, and then takes out the corresponding weights to perform the addition operations. The only difference between VSC and SSC is the addressing mapping function. In SSC, it is specified that the convolution is performed only if there is a spike input at the position corresponding to $W_4$ (the center position of the convolution kernel). VSC does not have this restriction.}
    \label{fig:sparseconv}
\end{figure}

\subsection{Neuromorphic Chips Naturally Support MIM-SNN Pre-training}\label{subsec_scale_2}

Spike convolution is performed on neuromorphic chips in a spike-driven manner. Event-by-event sparse processing enables neuromorphic computing to enjoy the advantages of low latency and low power consumption. The system state is changed based on each spike input. Upon receiving a spike, the SNN core identifies the corresponding weight and target neuron position via address search and updates the target neuron state accordingly~\cite{Speck}. 

Spike Sparse Convolution (SSC) is proposed to solve the information leakage problem during MIM-SNN pre-training. Since the Vanilla Spike Convolution (VSC) itself is a sparse computation convolution, we are interested in the difference and connection between VSC and SSC. As shown in Fig. \ref{fig:sparseconv}, we observe that SSC actually only has one more judgment condition than VSC when performing spike convolution. In SSC, the convolution is performed only if there is a spike input at the position corresponding to $W_4$ (the center position of the convolution kernel). Therefore, SSC can always pass the information in the spike feature map to the corresponding position of the next layer. This is important in cases where a MIM encoder is required, as most of the information needs to be masked. \rone{Moreover, masked images are first divided into patches, and SSC records the positions of these masks in E-SpikeFormer’s MIM pretraining. Then SSC operates exclusively at unmasked locations where the central kernel firing spikes.}

Given the commonality of VSC and SSC, we only need to slightly modify the addressing mapping function corresponding to VSC to execute SSC on the neuromorphic chip. In other words, current neuromorphic chips can support on-chip learning based on MIM pre-training.

\subsection{Summary}\label{subsec_SFA_summary}

After discussing the hardware implementation, we summarize here the effect of spike firing patterns on SNNs. Various spike firing patters, e.g., $\{\mathbf{S}^l[d]\}_D$ in ANN2SNN and vanilla training versus $\{\hat{\mathbf{S}}^l[d]\}_D$ in our SFA training, result in varying spike distributions. These differences subsequently influence the training, power, performance, scaling, and hardware implementation of SNNs. All three training methods can be thought of as rate coding. We still use Fig.~\ref{Fig_spike_firing_pattern} as an example of the difference between them. ANN2SNN and direct training must run all ten timesteps to complete the rate coding. SFA training, on the other hand, only needs to fire spikes at the first few timesteps. This minor yet significant design improvement has profound impacts:

\romannumeral1) \textbf{Reducing training costs (Fig.~\ref{fig_sfa_training} and Fig.~\ref{fig_speed_up}).} When dealing with static vision tasks, SFA requires only one-timestep training, avoiding the huge training memory and time cost with traditional mulit-timestep direct training.

\romannumeral2) \textbf{Less power cost (Fig.~\ref{Fig_spike_map} and Fig.~\ref{Fig_SFR}).} In the static task, the inference stage of the SFA method only uses the image input once, without the need to repeat the input inference as in the traditional direct training. On the other hand, the spike distribution obtained by the SFA method has an spike firing rate that decreases progressively with timesteps.

\romannumeral3) \textbf{Higher performance (Fig.~\ref{Fig_spike_map}).} The SFA method yields a clearer spike map.

\romannumeral4) \textbf{Easy to scale (Fig.~\ref{fig_rank}).} Integer training is more likely to learn good features.

\romannumeral5) \textbf{More suitable for execution on asynchronous neuromorphic chips (Fig.~\ref{Fig_chip_imp}).} The inference process in the SFA method is naturally suited to be deployed on asynchronous neuromorphic chips, which can fully embody the low-power and low-latency features of SNNs.

\section{Conclusion}\label{section:conclusion}
We address the performance and training consumption gap between SNNs and ANNs. A key contribution is identifying the mechanistic flaw of binary spike firing in spiking neurons. To overcome these limitations, we propose a Spike Firing Approximation (SFA) method. This method is based on integer training and spike-driven inference, aiming to optimize the spike firing pattern of spiking neurons. Our results demonstrate that optimization the spike firing pattern leads to comprehensive improvements in SNNs, including enhanced training efficiency, reduced power consumption, improved performance, easier scalability, and better utilization of neuromorphic chips. Additionally, we develop an efficient spike-driven Transformer architecture and a spike masked autoencoder to prevent performance degradation during SNN scaling. By addressing the training and performance challenges of large-scale SNNs, we pave the way for a new era in neuromorphic computing.

\ifCLASSOPTIONcompsoc
\section*{Acknowledgments}
This work was partially supported by National Science Foundation for Distinguished Young Scholars (62325603), National Natural Science Foundation of China (62236009, U22A20103, 62441606).

\else
\section*{Acknowledgment}
\fi


\bibliographystyle{IEEETran}
\bibliography{./ref}

\appendices

\ifCLASSOPTIONcaptionsoff
  \newpage
\fi
\end{document}